\DeclarePairedDelimiter\ceil{\lceil}{\rceil}
\DeclareMathOperator*{\argmax}{arg\,max}
\newtheorem{theorem}{Theorem}[section]
\title{Generative Adversarial Neural Architecture Search}
\author{
Seyed Saeed Changiz Rezaei$^1$\footnote{Equal Contribution. Correspondence to: Di Niu <dniu@ualberta.ca>, Seyed Rezaei <seyeds.rezaei@huawei.com>.}\and
Fred X. Han$^{1*}$\and
Di Niu$^2$\and
Mohammad Salameh$^1$\and
Keith Mills$^2$\and
Shuo Lian$^3$\and
Wei Lu$^1$\And
Shangling Jui$^3$\\
\affiliations
$^1$Huawei Technologies Canada Co., Ltd.\\
$^2$Department of Electrical and Computer Engineering, University of Alberta\\
$^3$Huawei Kirin Solution, Shanghai, China\\
}
\begin{document}

\maketitle

\begin{abstract}
Despite the empirical success of neural architecture search (NAS) in deep learning applications, the optimality, reproducibility and cost of NAS schemes remain hard to assess.  In this paper, we propose Generative Adversarial NAS (GA-NAS) with theoretically provable convergence guarantees, promoting stability and reproducibility in neural architecture search. Inspired by importance sampling, GA-NAS iteratively fits a generator to previously discovered top architectures, thus increasingly focusing on important parts of a large search space. Furthermore, we propose an efficient adversarial learning approach, where the generator is trained by reinforcement learning based on rewards provided by a discriminator, thus being able to explore the search space without evaluating a large number of architectures. Extensive experiments show that GA-NAS beats the best published results under several cases on three public NAS benchmarks. In the meantime, GA-NAS can handle ad-hoc search constraints and search spaces. We show that GA-NAS can be used to improve already optimized baselines found by other NAS methods, including EfficientNet and ProxylessNAS, in terms of ImageNet accuracy or the number of parameters, in their original search space.
\end{abstract}

\section{Introduction}
Neural architecture search (NAS) improves neural network model design by replacing the manual trial-and-error process with an automatic search procedure, and has achieved state-of-the-art performance on many computer vision tasks 
\cite{elsken2018neural}.  
Since the underlying search space of architectures grows exponentially as a function of the architecture size, searching for an optimum neural architecture is like looking for a needle in a haystack. 
A variety of \textit{search algorithms} have been proposed for NAS, including random search (RS) \cite{li2019RS}, differentiable architecture search (DARTS) \cite{liu2018DARTS}, Bayesian optimization (BO) \cite{kandasamy2018neural}, evolutionary algorithm (EA) \cite{dai2020fbnetv3}, and reinforcement learning (RL) \cite{pham2018ENAS}.

Despite a proliferation of NAS methods proposed, their sensitivity to random seeds and reproducibility issues concern the community \cite{li2019RS}, \cite{yu2019evaluating}, \cite{yang2019evaluation}.
Comparisons between different search algorithms, such as EA, BO, and RL, etc., are particularly hard, as there is no shared search space or experimental protocol followed by all these NAS approaches.
To promote fair comparisons among methods, multiple NAS benchmarks have recently emerged,
including NAS-Bench-101 \cite{ying2019nasbench101}, NAS-Bench-201  \cite{dong2020nasbench201}, and NAS-Bench-301 \cite{siems2020bench}, which contain collections of architectures with their associated performance. This has provided an opportunity for researchers to fairly benchmark search algorithms (regardless of the search space in which they are performed) by evaluating how many queries to architectures an algorithm needs to make in order to discover a top-ranked architecture in the benchmark set \cite{luo2020semi,siems2020bench}.
The number of queries converts to an indicator of how many architectures need be evaluated in reality, which often forms the bottleneck of NAS.




In this paper, we propose Generative Adversarial NAS (GA-NAS), a provably converging and efficient  search algorithm to be used in NAS based on adversarial learning.
Our method is first inspired by the \emph{Cross Entropy} (CE) method \cite{rubinstein2013cross} in importance sampling, which iteratively retrains an architecture generator to fit to the distribution of winning architectures generated in previous iterations so that the generator will increasingly sample from more important regions in an extremely large search space. However, such a generator cannot be efficiently trained through back-propagation, as performance measurements can only be obtained for discretized architectures and thus the model is not differentiable. To overcome this issue, GA-NAS uses RL to train an architecture generator network based on RNN and GNN. Yet unlike other RL-based NAS schemes, GA-NAS does not obtain rewards by evaluating generated architectures, which is a costly procedure if a large number of architectures are to be explored. Rather, it learns a discriminator to distinguish the winning architectures from randomly generated ones in each iteration. This enables the generator to be efficiently trained based on the rewards provided by the discriminator, without many true evaluations. We further establish the convergence of GA-NAS in a finite number of steps, by connecting GA-NAS to an importance sampling method with a symmetric Jensen–Shannon (JS) divergence loss.

Extensive experiments have been performed to evaluate GA-NAS in terms of its convergence speed, reproducibility and stability in the presence of random seeds, scalability, flexibility of handling constrained search, and its ability to improve already optimized baselines. We show that GA-NAS outperforms a wide range of existing NAS algorithms, including EA, RL, BO, DARTS, etc., and state-of-the-art results reported on three representative NAS benchmark sets, including NAS-Bench-101, NAS-Bench-201, and NAS-Bench-301---it consistently finds top ranked architectures within a lower number of queries to architecture performance. We also demonstrate the flexibility of GA-NAS by showing its ability to incorporate ad-hoc hard constraints and its ability to further improve existing strong architectures found by other NAS methods.
Through experiments on ImageNet, we show that GA-NAS can enhance EfficientNet-B0 \cite{tan2019efficientnet} and ProxylessNAS \cite{cai2018proxylessnas} in their respective search spaces, resulting in architectures with higher accuracy and/or smaller model sizes.



\section{Related Work}
\label{sec:related}
A typical NAS method includes a \emph{search phase} and an \emph{evaluation phase}. 
This paper is concerned with the search phase, of which the most important performance criteria are robustness, reproducibility and search cost.
DARTS \cite{liu2018DARTS} has given rise to numerous optimization schemes for NAS \cite{xie2018SNAS}.  
While the objectives of these algorithms may vary, they all operate in the same or similar search space. 
However, \cite{yu2019evaluating} demonstrates that DARTS performs similarly to a random search and its results heavily dependent on the initial seed. 
In contrast, GA-NAS has a convergence guarantee under certain assumptions and its results are reproducible.

NAS-Bench-301 \cite{siems2020bench} provides a formal benchmark for all $10^{18}$ architectures in the DARTS search space.
Preceding NAS-Bench-301 are NAS-Bench-101 \cite{ying2019nasbench101} and NAS-Bench-201 \cite{dong2020nasbench201}. 
Besides the cell-based searches, GA-NAS also applies to macro-search \cite{cai2018proxylessnas,tan2018mnasnet}, which searches for an ordering of a predefined set of blocks. 

On the other hand, several RL-based NAS methods have been proposed. 
ENAS \cite{pham2018ENAS} is the first Reinforcement Learning scheme in weight-sharing NAS.
TuNAS \cite{bender2020can} shows that guided policies decisively exceed the performance of random search on vast search spaces. 
In comparison, GA-NAS proves to be a highly efficient RL solution to NAS, since the rewards used to train the actor come from the discriminator instead of from costly evaluations. 

Hardware-friendly NAS algorithms take constraints such as model size, FLOPS, and inference time into account \cite{cai2018proxylessnas,tan2018mnasnet}, usually by introducing regularizers into the loss functions. 
Contrary to these methods, GA-NAS can support ad-hoc search tasks, by enforcing customized hard constraints in importance sampling instead of resorting to approximate penalty terms.

\section{The Proposed Method}
\label{sec:algorithm_generator}
In this section, we present Generative Adversarial NAS (GA-NAS) as a search algorithm to discover top architectures in an extremely large search space.
 GA-NAS is theoretically inspired by the importance sampling approach and implemented by a generative adversarial learning framework to promote efficiency.

We can view a NAS problem as a combinatorial optimization problem. For example, suppose that $x$ is a Directed Acyclic Graph (DAG) connecting a certain number of operations, each chosen from a predefined operation set. Let $S(x)$ be a real-valued function representing the performance, e.g., accuracy, of $x$. 
In NAS, we optimize $S(x)$ subject to $x\in \mathcal X$, where $\mathcal X$ denotes the underlying search space of neural architectures.

One approach to solving a combinatorial optimization problem, especially the NP-hard ones, is to view the problem in the framework of \textit{importance sampling} and \emph{rare event simulation} \cite{rubinstein2013cross}. In this approach we consider a family of probability densities $\{p(.;\theta)\}_{ \theta \in \Theta}$ on the set $\mathcal X$, with the goal of finding a density $p(.;\theta^*)$ that assigns higher probabilities to  optimal solutions to the problem. Then, with high probability, the sampled solution from the density $p(.;\theta^*)$ will be an optimal solution. This approach for importance sampling is called the \emph{Cross-Entropy} method. 

\subsection{Generative Adversarial NAS}

\begin{algorithm}[t]
	\caption{GA-NAS Algorithm} \label{alg:alg3}
	\begin{algorithmic}[1]
	    \State \textbf{Input:} An initial set of architectures $\mathcal X_0$; Discriminator $D$; Generator $G(x;\theta_0)$; A positive integer $k$;
	    \For {$t=1,2,\ldots, T$}
	        \State $\mathcal T\leftarrow$ top $k$ \text{ architectures  of}~$\bigcup_{i=0}^{t-1}\mathcal X_i$ according to the performance evaluator $S(.)$.
	        \State Train $G$ and $D$ to obtain their parameters 
	        \begin{equation}
	        \begin{multlined}[c]
	        (\theta_t, \phi_t)\longleftarrow\arg\min_{\theta_t}\max_{\phi_t} V\left(G_{\theta_t}, D_{\phi_t}\right) =\\ \mathbb{E}_{x\sim p_{\mathcal T}} \left[\log D_{\phi_t}(x)\right] + \mathbb{E}_{x\sim p(x;\theta_t)}\left[\log \left(1 - D_{\phi_t}(x)\right)\right].\nonumber
	        \end{multlined}
	        \end{equation} \label{step:minimax}
	        \State Let $G(x;\theta_t)$ generate a new set of architectures $\mathcal X_t$.
	        \State Evaluate the performance $S(x)$ of every $x \in \mathcal X_t$.
	   \EndFor
	\end{algorithmic} 
\end{algorithm}

 GA-NAS is presented in Algorithm~\ref{alg:alg3}, where
 a discriminator $D$ and a generator $G$ are learned over iterations.
 In each iteration of GA-NAS, the generator $G$ will generate a new set of architectures $\mathcal X_t$, which gets combined with previously generated architectures. 
 A set of top performing architectures $\mathcal T$, which we also call \emph{true} set of architectures, is updated by selecting top $k$ from already generated architectures and gets improved over time.

 Following the GAN framework, originally proposed by \cite{goodfellow2014generative}, the discriminator $D$ and the generator $G$ are trained by playing a two-player minimax game whose corresponding parameters $\phi_t$ and $\theta_t$ are optimized by Step \ref{step:minimax} of Algorithm~\ref{alg:alg3}.
 After Step \ref{step:minimax}, $G$ learns to generate architectures to fit to the distribution of $\mathcal T$, the true set of architectures.
 
Specifically, after Step \ref{step:minimax} in the $t$-th iteration, the generator $G(.;\theta_t)$ parameterized by $\theta_t$ learns to generate architectures from a family of probability densities $\{p(.;\theta)\}_{ \theta \in \Theta}$ such that $p(.;\theta_t)$ will approximate $p_{\mathcal T}$, the architecture distribution in true set $\mathcal T$, while $\mathcal T$ also gets improved for the next iteration by reselecting top $k$ from generated architectures.


 We have the following theorem which provides a theoretical convergence guarantee for GA-NAS under mild conditions. 

\begin{theorem}\label{thm:main_theorem}
Let $\alpha > 0$ be any real number that indicates the performance target, such that $\max_{x\in\mathcal X}~S(x) \geq \alpha$. Define $\gamma_t$ as a level parameter in iteration $t$ ($t=0,\ldots,T$),
such that the following holds for all the architectures $x_t\in\mathcal X_t$  generated by the generator $G(.;\theta_t)$  in the $t$-th iteration:
\[
S(x_t) \geq \gamma_t,\quad\forall x_t\in\mathcal X_t.
\]
Choose $k$, $|\mathcal X_t|$, and the $\gamma_t$ defined above such that
$\gamma_0 < \alpha$, and
\begin{equation}
    \gamma_t \geq  \min(\alpha, \gamma_{t-1} + \delta),~\textit{for some}~\delta > 0,~\forall~ t\in\{1,\ldots,T\}.\nonumber
\end{equation}
Then, GA-NAS Algorithm can find an architecture $x$ with $S(x) \geq \alpha$ in a finite number of iterations $T$.
\end{theorem}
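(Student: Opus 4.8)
The plan is to reduce the statement to a purely arithmetic ``level-ascending'' argument and then read off the conclusion from the defining property of the generator's output. The key quantity is the level parameter $\gamma_t$: the hypotheses force it to grow by at least $\delta$ per iteration until it is capped at $\alpha$, while simultaneously every architecture the generator emits in iteration $t$ lies at or above $\gamma_t$. Combining these two facts, once $\gamma_t$ has climbed to $\alpha$ the very batch $\mathcal X_t$ generated in that iteration is guaranteed to contain an architecture meeting the target, so it suffices to bound how many iterations the climb takes.

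Concretely, first I would prove by induction on $t$ that $\gamma_t \ge \min(\alpha,\ \gamma_0 + t\delta)$. The base case $t=0$ is immediate. For the inductive step, the assumed recurrence gives $\gamma_t \ge \min(\alpha,\ \gamma_{t-1}+\delta)$; since $y \mapsto \min(\alpha,y)$ is nondecreasing, substituting the inductive hypothesis $\gamma_{t-1}\ge\min(\alpha,\ \gamma_0+(t-1)\delta)$ yields $\gamma_t \ge \min(\alpha,\ \gamma_0 + t\delta)$. Taking $T := \lceil (\alpha-\gamma_0)/\delta\rceil$, which is finite because $\gamma_0<\alpha$ and $\delta>0$, we obtain $\gamma_{T} \ge \alpha$. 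Now I invoke the generation step of Algorithm~\ref{alg:alg3}: in iteration $T$ the generator $G(\cdot;\theta_T)$ produces a nonempty set $\mathcal X_T$, and by hypothesis every $x_T\in\mathcal X_T$ satisfies $S(x_T)\ge\gamma_T\ge\alpha$. Hence GA-NAS has discovered an architecture with performance at least $\alpha$ after this finite number $T$ of iterations. The assumption $\max_{x\in\mathcal X}S(x)\ge\alpha$ enters here only to guarantee that such a target is attainable, so that the schedule $\{\gamma_t\}$ and the constraint $S(x_t)\ge\gamma_t$ are not vacuous.

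The part that needs the most care — and where the real content of the theorem lies — is justifying that a schedule with a uniform increment $\delta>0$ is actually consistent with how $G$ and $D$ are trained, i.e.\ that ``$S(x_t)\ge\gamma_t$ for all $x_t\in\mathcal X_t$'' can hold with $\gamma_t$ strictly increasing. Here I would appeal to the GAN analysis: at the optimum of the inner minimax in Step~\ref{step:minimax}, the objective $V(G_{\theta_t},D_{\phi_t})$ reduces (up to an additive constant) to a Jensen--Shannon divergence between $p(\cdot;\theta_t)$ and $p_{\mathcal T}$, whose unique minimizer is $p(\cdot;\theta_t)=p_{\mathcal T}$. Since $\mathcal T$ is the current top-$k$ set, its minimum performance can serve as $\gamma_t$, so all freshly sampled architectures lie at or above $\gamma_t$. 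The strict increment $\delta$ then comes from the exploration built into the RNN/GNN generator, which keeps assigning positive probability to as-yet-unseen architectures: as long as $\gamma_{t-1}<\alpha$, some strictly better architecture remains to be found, and choosing $|\mathcal X_t|$ and $k$ large enough lets such an architecture enter $\mathcal T$ within one iteration with the required margin.

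I expect the main obstacle to be precisely this last point. The arithmetic ascent in the second paragraph is routine; the delicate step is making rigorous (rather than assuming outright) that the trained generator respects the level $\gamma_t$ and that the level genuinely advances by $\delta$ — this hinges on an idealized-GAN hypothesis (exact minimization of the JS divergence) plus a guarantee that the generator's support strictly enlarges toward higher-performing architectures until the optimum is reached. In a fully honest treatment one would either (i) state these as explicit assumptions, as the theorem's hypothesis on $\gamma_t$ effectively does, or (ii) quantify the sampling probability of improving architectures and argue the level increases by at least $\delta$ in expectation within a bounded number of rounds. Either way, given the hypotheses as stated the remainder of the proof is the short induction above.
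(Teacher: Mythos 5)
Your proof is correct, and its engine --- the induction $\gamma_t \ge \min(\alpha,\,\gamma_0+t\delta)$ followed by the bound $T \le \lceil(\alpha-\gamma_0)/\delta\rceil$ and the observation that every $x\in\mathcal X_T$ then satisfies $S(x)\ge\gamma_T\ge\alpha$ --- is exactly the engine of the paper's proof as well (the paper states the cruder bound $T\le\lceil\alpha/\delta\rceil$). Where you diverge is in the packaging and in how the realizability of the $\gamma_t$ schedule is handled. The paper does not argue directly from the theorem's hypotheses; it first maps GA-NAS onto a modified cross-entropy algorithm for rare-event estimation (Algorithm~\ref{alg:alg2_modified}) in which the KL projection of the CE method is replaced by JS-divergence minimization, identifies Step~\ref{step:minimax} of Algorithm~\ref{alg:alg3} with that JS step and the top-$k$ selection with a $(1-\rho_t)$-quantile via $k=\rho_t\,\lvert\bigcup_{i=0}^{t-1}\mathcal X_i\rvert$, and then imports the termination argument of Theorem~\ref{thm:Thm1}. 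The one piece of genuine content that reduction buys --- and which you correctly isolate as ``the part that needs the most care'' --- is the feasibility claim: in the proof of Theorem~\ref{thm:Thm1} the paper shows that $P(S(X)\ge\alpha;\theta_t)>0$, so that for $\rho_t$ small enough the quantile $\gamma(\theta_t,\rho_t)$ reaches $\min(\alpha,\gamma_{t-1}+\delta)$, i.e.\ the increment step can always be carried out. You instead take that schedule as given (which the theorem's hypotheses literally permit) and flag the feasibility question as an idealized-GAN assumption to be stated explicitly or quantified; that is an honest and arguably cleaner reading of what the theorem actually asserts, while the paper's route makes the same idealization implicitly (exact JS-optimality of the generator, positive mass on above-$\alpha$ architectures) inside the CE machinery. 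Either way, given the hypotheses as stated, your two-paragraph argument suffices.
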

\proof 
Refer to Appendix 
for a proof of the theorem.

\paragraph{Remark.} 
This theorem indicates that as long as there exists an architecture in $\mathcal X$ with performance above $\alpha$, GA-NAS is guaranteed to find an architecture with $S(x)\geq\alpha$ in a finite number of iterations. 
From \cite{goodfellow2014generative}, the minimax game of Step \ref{step:minimax} of Algorithm~\ref{alg:alg3} is equivalent to minimizing the Jensen-Shannon (JS)-divergence between the distribution $p_{\mathcal T}$ of the currently top performing architectures $\mathcal T$ and the distribution $p(x;\theta_t)$ of the newly generated architectures. 
Therefore, our proof involves replacing the arguments around the Cross-Entropy framework in importance sampling \cite{rubinstein2013cross} with minimizing the JS divergence, as shown in the Appendix.


\subsection{Models}

We now describe different components in GA-NAS. 
Although GA-NAS can operate on any search space, 
here we describe the implementation of the Discriminator and Generator in the context of cell search.  We evaluate GA-NAS for both cell search and macro search in  experiments. 
A \emph{cell} architecture $\mathcal C$ is a Directed Acyclic Graph (DAG) consisting of multiple nodes and directed edges. Each intermediate node represents an operator, such as convolution or pooling, from a predefined set of operators. 
Each directed edge represents the information flow between nodes. We assume that a cell has at least one input node and only one output node. 

\paragraph{Architecture Generator.} 
Here we generate architectures in the discrete search space of DAGs. Particularly, we generate architectures in an autoregressive fashion, which is a frequent technique in neural architecture generation such as in ENAS. 
At each time step $t$, given a partial cell architecture ${\mathcal C}_t$ generated by the previous time steps,  GA-NAS uses an encoder-decoder architecture to decide what new operation to insert and which previous nodes it should be connected to.
The encoder is a multi-layer $k$-GNN \cite{morris2019weisfeiler}.
The decoder consists of an MLP that outputs the operator probability distribution and a Gated Recurrent Unit (GRU) 
that recursively determines the edge connections to previous nodes.

\paragraph{Pairwise Architecture Discrimator.} 
In Algorithm \ref{alg:alg3}, $\mathcal T$ contains a limited number of architectures. 
To facilitate a more efficient use of $\mathcal T$, we adopt a relativistic discriminator \cite{jolicoeur2018relativistic} $D$ that follows a Siamese scheme. 
$D$ takes in a pair of architectures where one is from $\mathcal T$, and determines whether the second one is from the same distribution.
The discriminator is implemented by encoding both cells in the pair with a shared $k$-GNN followed by an MLP classifier with details provided in the Appendix.

\begin{figure}[t]
	\centering
	\includegraphics[width=3.2in]{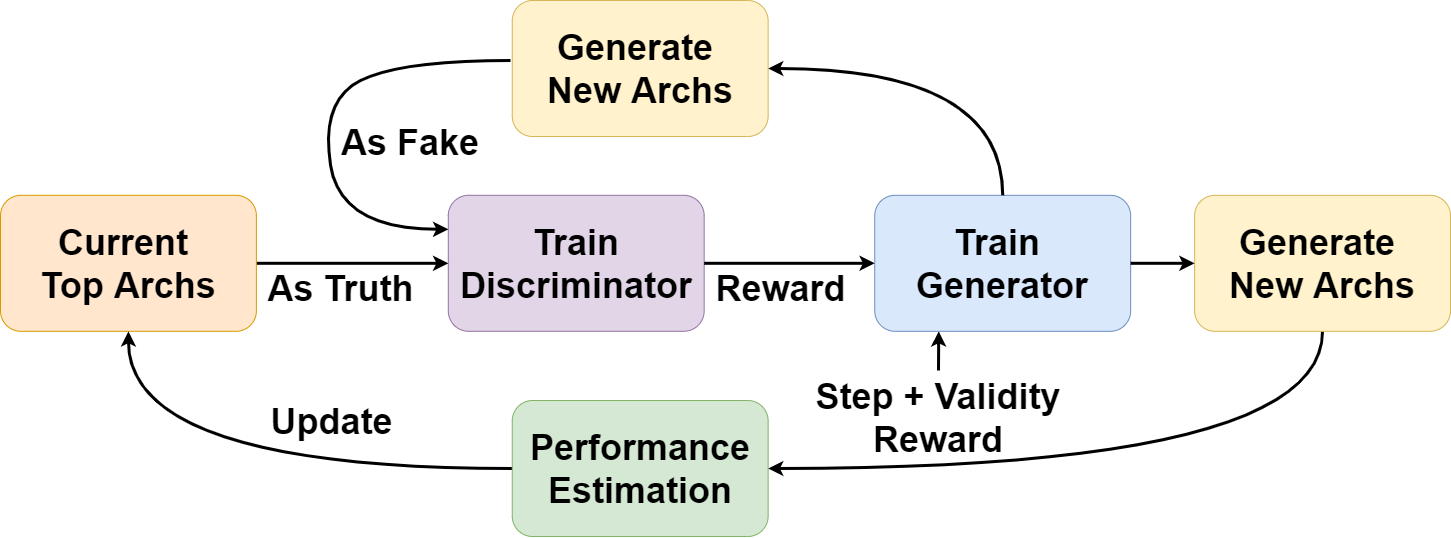} 
	\caption{The flow of the proposed GA-NAS algorithm.}
	\label{fig:ga_nas_flow}
\end{figure}

\subsection{Training Procedure}
Generally speaking, in order to solve the minimax problem in Step \ref{step:minimax} of Algorithm \ref{alg:alg3}, one can follow the minibatch stochastic gradient descent (SGD) training procedure for training GANs as originally proposed in \cite{goodfellow2014generative}. However, this SGD approach is not applicable to our case, since the architectures (DAGs) generated by $G$ are discrete samples, and their performance signals cannot be directly back-propagated to update $\theta_t$, the parameters of $G$. Therefore, to approximate the JS-divergence minimization between the distribution of top $k$ architectures $\mathcal T$, i.e., $p_{\mathcal T}$, and the generator distribution $p(x;\theta_t)$ in iteration $t$, we replace Step \ref{step:minimax} of Algorithm~\ref{alg:alg3} by alternately training $D$ and $G$ using a procedure outlined in Algorithm~\ref{alg:training}.  
Figure~\ref{fig:ga_nas_flow} illustrates the overall training flow of GA-NAS.

We first train the GNN-based discriminator using pairs of architectures sampled from $\mathcal T$ and $\mathcal F$ based on supervised learning. 
Then, we use Reinforcement Learning (RL) to train $G$ with the reward defined in a similar way as in \cite{you2018graph}, including a step reward that reflects the validity of an architecture during each step of generation and a final reward that mainly comes from the discriminator prediction $D$.
When the architecture generation terminates, a \textit{final} reward $R_{final}$ penalizes the generated architecture $x$ according to the total number of violations of validity or rewards it with a score from the discriminator $D(x)$ that indicates how similar it is to the current true set $\mathcal T$. 
Both rewards together ensure that $G$ generates valid cells that are structurally similar to top cells from the previous time step. We adopt Proximal Policy Optimization (PPO), 
a policy gradient algorithm with generalized advantage estimation to train the policy, which is also used for NAS in \cite{tan2018mnasnet}. 

\paragraph{Remark.} 
The proposed learning procedure has several benefits. 
\textit{First,} since the generator must sample a discrete architecture $x$ to obtain its discriminator prediction $D(x)$, the entire generator-discriminator pipeline is not end-to-end differentiable and cannot be trained by SGD as in the original GAN. Training $G$ with PPO solves this differentiability issue. 
\textit{Second,} in PPO there is an entropy loss term that encourages variations in the generated actions. By tuning the multiplier for the entropy loss, we can control the extent to which we explore a large search space of architectures.
\textit{Third,} using the discriminator outputs as the reward can significantly reduce the number of architecture evaluations compared to a conventional scheme where the fully evaluated network accuracy is used as the reward. Ablation studies in Section \ref{sec:exp} verifies this claim. Also refer to the Appendix for a detailed discussion.

\begin{algorithm}[t]
	\caption{Training Discriminator $D$ and the Generator $G$} \label{alg:training}
	\begin{algorithmic}[1]
		\State \textbf{Input:} Discriminator $D$; Generator $G(x;\theta_{t - 1})$; Data set $\mathcal T$.
	    \For {$ t^\prime =1,2,\ldots,~T^\prime$}
	        \State Let $G(x;\theta_{t-1})$ generate $k$ random architectures to form the set $\mathcal F$.
	        \State Train discriminator $D$ with $\mathcal{T}$ being positive samples and $\mathcal{F}$ being negative samples.
	        \State Using the output of $D(x)$ as the main reward signal, train $G(x;\theta_t)$ with Reinforcement Learning.
	   \EndFor
	\end{algorithmic} 
\end{algorithm}
\section{Experimental Results}\label{sec:exp}
We evaluate the performance of GA-NAS as a search algorithm by comparing it with a wide range of existing NAS algorithms in terms of convergence speed and stability on NAS benchmarks. We also 
show the capability of GA-NAS to improve a given network, including already optimized strong baselines such as EfficientNet and ProxylessNAS. 

\subsection{Results on NAS Benchmarks with or without Weight Sharing}

To evaluate search algorithm and decouple it from the impact of search spaces, we query three NAS benchmarks: NAS-Bench-101 \cite{ying2019nasbench101}, NAS-Bench-201 \cite{dong2020nasbench201}, and NAS-Bench-301 \cite{siems2020bench}. The goal is to discover the highest ranked cell with as few queries as possible. 
The number of queries to the NAS benchmark is used as a reliable measure for search cost in recent NAS literature \cite{luo2020semi,siems2020bench}, because each query to the NAS benchmark corresponds to training and evaluating a candidate architecture from scratch, which constitutes the major bottleneck in the overall cost of NAS. By checking the rank an algorithm can reach in a given number of queries, one can also evaluate the convergence speed of a search algorithm. 
More queries made to the benchmark would indicate a longer evaluation time or higher search cost  in a real-world problem.

To further evaluate our algorithm when the true architecture accuracies are unknown, we train a weight-sharing supernet on NAS-Bench-101 and compare GA-NAS with a range of NAS schemes based on weight sharing.
\begin{table}[t]
   \centering
   \small
	\begin{tabular}{|l|c|c|} \hline
          \textbf{Algorithm}                      & \textbf{Acc (\%)} &  \textbf{\#Q}  \\ \hline
		Random Search  $^\dagger$          & 93.66             & 2000                 \\
		RE $^\dagger$                      & 93.97             & 2000                 \\
		NAO $^\dagger$                      & 93.87             & 2000                  \\
		BANANAS                             & \textbf{94.23}              & 800                   \\
		SemiNAS $^\dagger$                 & 94.09             & 2100                   \\
		SemiNAS $^\dagger$                  & 93.98             & 300                   \\
        \hline
		\textbf{GA-NAS}-setup1                           & \textbf{94.22}  & \textbf{150}          \\        
        \textbf{GA-NAS}-setup2                           & \textbf{94.23}  & \textbf{378}          \\
		\hline
	\end{tabular}
	\caption{\label{table:nb101_best_acc} 
	The best accuracy values found by different search algorithms on NAS-Bench-101 without weight sharing. Note that 94.23\% and 94.22\% are the accuracies of the 2nd and 3rd best cells. $\dagger$:  taken from \protect\cite{luo2020semi}}   
\end{table}
\textbf{NAS-Bench-101} is the first publicly available benchmark for evaluating NAS algorithms.
It consists of 423,624 DAG-style cell-based architectures, each trained and evaluated for 3 times. 
Metrics for each run include training time and accuracy.
Querying NAS-Bench-101 corresponds to evaluating a cell in reality. 
We provide the results on two setups. 
In the first setup, we set $|\mathcal X_0|=50$, $|\mathcal X_t|=|\mathcal X_{t-1}|+50,~ t \geq 1$, and $ k = 25$, 
In the second setup, we set $|\mathcal X_0|=100$, $|\mathcal X_t|=|\mathcal X_{t-1}|+100,~t \geq 1$, and $k=50$.  For both setups, the initial set $\mathcal X_0$ is picked to be a random set, and the number of iterations $T$ is 10.
We run each setup with 10 random seeds and the search cost for a run is 8 GPU hours on GeForce GTX 1080 Ti.
\begin{table*}[t]
   \centering
   \small
	\begin{tabular}{|l|c|c|c|} \hline
          \textbf{Algorithm}    &  \textbf{Mean Acc ($\%$)} & \textbf{Mean Rank} &\textbf{Average \#Q}   \\ \hline
		 Random Search    & 93.84 $\pm$ 0.13  & 498.80 & 648     \\  
		 Random Search    & 93.92 $\pm$ 0.11  & 211.50 & 1562      \\  		
		\textbf{GA-NAS}-Setup1    &\textbf{94.22} $\pm$ 4.45e-5 & 2.90 & 647.50 $\pm$ 433.43    \\        
        \textbf{GA-NAS}-Setup2    &\textbf{94.23} $\pm$ 7.43e-5 & 2.50 & 1561.80 $\pm$ 802.13  \\
		\hline
	\end{tabular}
	\caption{The average statistics of the best cells found on NAS-Bench-101 without weight sharing, averaged over 10 runs (with std shown). Note that we set the number of queries (\#Q) for Random Search to be the same as the average number of queries incurred by GA-NAS.}
	\label{table:nb101_avg_acc}
\end{table*}

Table~\ref{table:nb101_best_acc} compares GA-NAS to other methods for the best cell that can be found by querying NAS-Bench-101, in terms of the accuracy and the rank of this cell in NAS-Bench-101, along with the number of queries required to find that cell. Table~\ref{table:nb101_avg_acc} shows the average performance of GA-NAS in the same experiment over multiple random seeds. Note that Table~\ref{table:nb101_best_acc} does not list the average performance of other methods except Random Search, since all the other methods in Table~\ref{table:nb101_best_acc} only reported their single-run performance on NAS-Bench-101 in their respective experiments.

In both tables, we find that GA-NAS can reach a higher accuracy in fewer number of queries, and beats the best published results, i.e., BANANAS~\cite{white2019bananas} and SemiNAS~\cite{luo2020semi} by an obvious margin. Note that 94.22 is the 3rd best cell while 94.23 is the 2nd best cell in NAS-Bench-101. From Table~\ref{table:nb101_avg_acc}, we observe that GA-NAS achieves superior stability and reproducibility: GA-NAS-setup1 consistently finds the 3rd best in 9 runs and the 2nd best in 1 run out of 10 runs; GA-NAS-setup2 finds the 2nd best in 5 runs and the 3rd best in the other 5 runs. 

To evaluate GA-NAS when true accuracy is not available, we train a weight-sharing supernet on the search space of NAS-Bench-101 (with details provided in Appendix) and report the true test accuracies of architectures found by GA-NAS. We use the supernet to evaluate the accuracy of a cell on a validation set of 10k instances of CIFAR10 (see Appendix). Search time including supernet training is around 2 GPU days. 

\begin{table}[t]{}
	\centering
  \small
	\begin{tabular}{|l|c|c|c|} 
	\hline
         \textbf{Algorithm}              & \textbf{Mean Acc}         & \textbf{Best Acc} &  \textbf{Best Rank}  \\ \hline
		DARTS $^\dagger$          & 92.21 $\pm$ 0.61          & 93.02             & 57079                 \\
		NAO $^\dagger$            & 92.59 $\pm$ 0.59          & 93.33             & 19552                  \\
		ENAS $^\dagger$           & 91.83 $\pm$ 0.42          & 92.54             & 96939                   \\
        \hline
		\textbf{GA-NAS}         & \textbf{92.80 $\pm$ 0.54} & \textbf{93.46}    & \textbf{5386}          \\
		\hline
	\end{tabular}
	\caption{\label{table:nb101_supernet_acc_search}  Searching on NAS-Bench-101 with weight-sharing, with the mean test accuracy of the best cells from 10 runs, and the best accuracy/rank found by a single run. $\dagger$: taken from \protect\cite{yu2019evaluating}}  
\end{table}

We report results of 10 runs in Table~\ref{table:nb101_supernet_acc_search}, in comparison to other weight-sharing NAS schemes reported in \cite{yu2019evaluating}. We observe that using a supernet degrades the search performance in general as compared to true evaluation, because weight-sharing often cannot provide a completely reliable performance for the candidate architectures. Nevertheless, GA-NAS outperforms other approaches.
\begin{table*}[t]
	\centering
	\small
	\begin{threeparttable}[hb]
	\begin{tabular}{|l||c|c|c||c|c|c||c|c|c|}
		\hline
        & \multicolumn{3}{c||}{\textbf{CIFAR-10}} & \multicolumn{3}{c||}{\textbf{CIFAR-100}}  & \multicolumn{3}{c|}{\textbf{ImageNet-16-120}}\\ \hline
		\textbf{Algorithm}           & \textbf{Mean Acc}  & \textbf{Rank}&  \textbf{\#Q} & \textbf{Mean Acc}  & \textbf{Rank}&  \textbf{\#Q} & \textbf{Mean Acc}  & \textbf{Rank}&  \textbf{\#Q} \\ \hline
		REA \tnote{$\dagger$}        & 93.92 $\pm$ 0.30          & -             & -      & 71.84 $\pm$ 0.99          & -             & -      & 45.54 $\pm$ 1.03          & -     & - \\
		RS \tnote{$\dagger$}         & 93.70 $\pm$ 0.36          & -             & -      & 71.04 $\pm$ 1.07          & -             & -      & 44.57 $\pm$ 1.25          & -     & - \\
		REINFORCE \tnote{$\dagger$}  & 93.85 $\pm$ 0.37          & -             & -      & 71.71 $\pm$ 1.09          & -             & -      & 45.24 $\pm$ 1.18          & -     & - \\
		BOHB \tnote{$\dagger$}       & 93.61 $\pm$ 0.52          & -             & -      & 70.85 $\pm$ 1.28          & -             & -      & 44.42 $\pm$ 1.49          & -     & - \\
		RS-500                       & 94.11 $\pm$ 0.16          & 30.81         & 500    & 72.54 $\pm$ 0.54          & 30.89         & 500    & 46.34 $\pm$ 0.41          & 34.18 & 500 \\
		\hline
		\textbf{GA-NAS}              & \textbf{94.34 $\pm$ 0.05} & \textbf{4.05} & 444 & \textbf{73.28 $\pm$ 0.17} & \textbf{3.25} & 444 & \textbf{46.80 $\pm$ 0.29} & \textbf{7.40} & 445 \\
		\hline
	\end{tabular}
	\begin{tablenotes}\footnotesize
	    \item{$\dagger$} The results are taken directly from NAS-Bench-201~\cite{dong2020nasbench201}.
	\end{tablenotes}
	\caption{\label{table:nb201}Searching on NAS-Bench-201 without weight sharing, with the mean accuracy and rank of the best cell found reported. \#Q represents the average number of queries per run. 
	We conduct 20 runs for GA-NAS.}
	\end{threeparttable}
\end{table*}

\subsubsection{NAS-Bench-201}
This search space contains 15,625 evaluated cells. 
Architecture cells 
consist of 6 searchable edges and 5 candidate operations.
We test GA-NAS on NAS-Bench-201 by conducting 20 runs for CIFAR-10, CIFAR-100, and ImageNet-16-120 using the true test accuracy. We compare against the baselines from the original NAS-Bench-201 paper \cite{dong2020nasbench201} \textit{that are also directly querying the benchmark data.} Since no information on the rank achieved and the number of queries is reported for these baselines, we also compare GA-NAS to Random Search (RS-500), which evaluates 500 unique cells in each run. Table~\ref{table:nb201} presents the results. 
We observe that GA-NAS outperforms a wide range of baselines, including Evolutionary Algorithm (REA), Reinforcement Learning (REINFORCE), and Bayesian Optimization (BOHB), on the task of finding the most accurate cell with much lower variance.
Notably, on ImageNet-16-120, GA-NAS outperforms REA by nearly $1.3$\% on average. 

Compared to RS-500, GA-NAS finds cells that are higher ranked while only exploring less than 3.2\% of the search space in each run. 
It is also worth mentioning that in the $20$ runs on all three datasets, GA-NAS can find the best cell in the entire benchmark more than once. 
Specifically for CIFAR-10, it found the best cell in 9 out of 20 runs.

\subsubsection{NAS-Bench-301}
This is another recently proposed benchmark based on the same search space as DARTS. 
Relying on surrogate performance models, NAS-Bench-301 reports the accuracy of $10^{18}$ unique cells. 
We are especially interested in how the number of queries (\#Q) needed to find an architecture with high accuracy scales in a large search space.
We run GA-NAS on NAS-Bench-301 v0.9. 
We compare with Random (RS) and Evolutionary (EA) search baselines. 
Figure~\ref{fig:nb301_comp} plots the average best accuracy along with the accuracy standard deviations versus the number of queries incurred under the three methods.
We observe that GA-NAS outperforms RS at all query budgets and outperforms EA when the number of queries exceeds 3k.
The results on NAS-Bench-301 confirm that for GA-NAS, the number of queries (\#Q) required to find a good performing cell scales well as the size of the search space increases. 
For example, on NAS-Bench-101, GA-NAS usually needs around 500 queries to find the 3rd best cell, with an accuracy $\approx94\%$ among 423k candidates, while on the huge search space of NAS-Bench-301 with up to $10^{18}$ candidates, it only needs around 6k queries to find an architecture with accuracy approximately equal to $95\%$.

In contrast to GA-NAS, EA search is less stable and does not improve much as the number of queries increases over 3000. GA-NAS surpasses EA for \#Q $\geq 3000$ in Figure~\ref{fig:nb301_comp}. What is more important is that the variance of GA-NAS is much lower than the variance of the EA solution over all ranges of \#Q. 
Even though for \#Q < 3000, GA-NAS and EA are close in terms of average performance, EA suffers from a huge variance.

\begin{figure}[t]
  \begin{center}
    \includegraphics[width=0.45\textwidth]{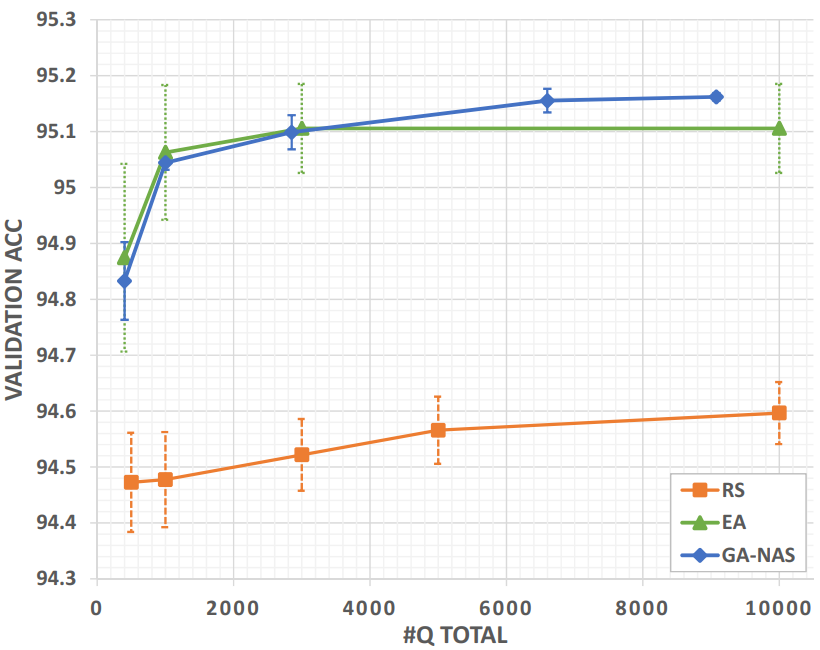}
  \end{center}
  \caption{\label{fig:nb301_comp}NAS-Bench-301 results comparing the means/standard deviations of the best accuracy found at various total query limits.}
\end{figure}

\subsubsection{Ablation Studies}
A key question one might raise about GA-NAS is how much the discriminator contributes to the superior search performance.
Therefore, we perform an ablation study on NAS-Bench-101 by creating an \textit{RL-NAS} algorithm for comparison.
RL-NAS removes the discriminator in GA-NAS and directly queries the accuracy of a generated cell from NAS-Bench-101 as the reward for training.
We test the performance of RL-NAS under two setups that differ in the total number of queries made to NAS-Bench-101. 
Table~\ref{table:nb101_ablation_1} reports the results.

\begin{table}[t]
   \centering
   \small
   \scalebox{0.96}{
	\begin{tabular}{|l|c|c|c|} 
	\hline
    \textbf{Algorithm} & \textbf{Avg. Acc} & \textbf{Avg. Rank} & \textbf{Avg. \#Q} \\
    \hline
     RL-NAS-1 & 94.14 $\pm$ 0.10  & 20.8 & 7093 $\pm$ 3904\\
     RL-NAS-2 & 93.78 $\pm$ 0.14 & 919.0  & 314 $\pm$ 300 \\
	 GA-NAS-Setup1 & 94.22$\pm$ 4.5e-5 & 2.9 & 648 $\pm$ 433 \\
	 GA-NAS-Setup2 & 94.23$\pm$ 7.4e-5 & 2.5 & 1562 $\pm$ 802 \\
	\hline
	\end{tabular}
	}
	\caption{ \label{table:nb101_ablation_1} Results of ablation study on NAS-Bench-101 by removing the discriminator and directly queries the benchmark for reward.}  
\end{table}

Compared to GA-NAS-Setup2, RL-NAS-1 makes 3$\times$ more queries to NAS-Bench-101, yet cannot outperform either of the GA-NAS setups.
If we instead limits the number of queries as in RL-NAS-2 then the search performance deteriorates significantly. 
Therefore, we conclude that the discriminator in GA-NAS is crucial for reducing the number of queries, which converts to the number of evaluations in real-world problems, as well as for finding architectures with better performance. 
Interested readers are referred to Appendix for more ablation studies as well as the Pareto Front search results.

\subsection{Improving Existing Neural Architectures}
\begin{table}[b]
	\centering
	\small
	\scalebox{0.97}{
	\begin{threeparttable}[hb]
	\begin{tabular}{|l|c|c|c|}
		\hline 
        \multicolumn{4}{|c|}{\textbf{ResNet}} \\ \hline
        \textbf{Algorithm}
        & \textbf{Best Acc}  & \textbf{Train Seconds}&  \textbf{\#Weights (M)} \\ \hline
		Hand-crafted    & 93.18          & 2251.6          & 20.35 \\
		Random Search    & 93.84          & 1836.0          & 10.62 \\
		\textbf{GA-NAS} & \textbf{93.96} & 1993.6 & 11.06  \\ \hline
		 \multicolumn{4}{|c|}{\textbf{Inception}} \\ \hline
		 \textbf{Algorithm} & \textbf{Best Acc}  & \textbf{Train Seconds}&  \textbf{\#Weights (M)} \\ \hline
		 Hand-crafted    & 93.09          & 1156.0          & 2.69  \\ 
		 Random Search    & 93.14          & 1080.4          & 2.18 \\
		\textbf{GA-NAS} & \textbf{93.28} & 1085.0 & 2.69  \\
		\hline
	\end{tabular}
	\end{threeparttable}
	}
	\caption{Constrained search results on NAS-Bench-101. GA-NAS can find cells that are superior to the 
	ResNet and Inception cells in terms of test accuracy, training time, and the number of weights.}
	\label{table:nb101_cons_acc_search}
\end{table}

We now demonstrate that GA-NAS can improve existing neural architectures, including ResNet 
and Inception 
cells in NAS-Bench-101, EfficientNet-B0 under hard constraints, and ProxylessNAS-GPU \cite{cai2018proxylessnas} in unconstrained search.

For ResNet and Inception cells, we use GA-NAS to find better cells from NAS-Bench-101 under a lower or equal training time and number of weights. This can be achieved by enforcing a hard constraint in choosing the truth set $\mathcal T$ in each iteration.
Table~\ref{table:nb101_cons_acc_search} shows that GA-NAS can find new, dominating cells for both cells, showing that it can enforce ad-hoc constraints in search, a property not enforceable by regularizers in prior work.
We also test Random Search under a similar number of queries to the benchmark under the same constraints, which is unable to outperform GA-NAS.



We now consider well-known architectures found on ImageNet 
i.e., EfficientNet-B0 and ProxylessNAS-GPU, which are already optimized strong baselines found by other NAS methods.
We show that GA-NAS can be used to improve a given architecture in practice by searching in its original search space.

For EfficientNet-B0, we set the constraint that the found networks all have an equal or lower number of parameters than EfficientNet-B0.
For the ProxylessNAS-GPU model, we simply put it in the starting truth set and run an unconstrained search to further improve its top-1 validation accuracy. More details are provided in the Appendix. 
Table~\ref{table:eb0_search_results} presents the improvements made by GA-NAS over both existing models.
Compared to EfficientNet-B0, GA-NAS can find new single-path networks that achieve comparable or better top-1 accuracy on ImageNet with an equal or lower number of trainable weights. 
We report the accuracy of EfficientNet-B0 and the GA-NAS variants \textit{without data augmentation}.
Total search time including supernet training is around 680 GPU hours on Tesla V100 GPUs.
It is worth noting that the original EfficientNet-B0 is found using the MNasNet \cite{tan2018mnasnet} with a search cost over 40,000 GPU hours.

\begin{table}[t]{}
   \centering
   \small
	\begin{tabular}{|l|c|c|c|} 
	\hline
    \textbf{Network} & \textbf{\#Params} & \textbf{Top-1 Acc}\\
    \hline
	 EfficientNet-B0 (no augment) & $5.3$M & $76.7$ \\
	 GA-NAS-ENet-1 & $\textbf{4.6}$M & $76.5$ \\
	 GA-NAS-ENet-2 & $\textbf{5.2}$M & $\textbf{76.8}$ \\
	 GA-NAS-ENet-3 & $5.3$M & $\textbf{76.9}$ \\
	 \hline
	 ProxylessNAS-GPU & $4.4$M & $75.1$ \\
	 GA-NAS-ProxylessNAS & $4.9$M & $\textbf{75.5}$ \\
	\hline
	\end{tabular}
	\caption{ \label{table:eb0_search_results} Results on the EfficientNet and ProxylessNAS spaces.}  
\end{table}
For ProxylessNAS experiments, we train a supernet on ImageNet 
and conduct an unconstrained search using GA-NAS for 38 hours on 8 Tesla V100 GPUs, a major portion of which, i.e., 29 hours is spent on querying the supernet for architecture performance. 
Compared to ProxylessNAS-GPU, GA-NAS can find an architecture with a comparable number of parameters and a better top-1 accuracy.


\section{Conclusion}\label{sec:conclusion}
In this paper, we propose Generative Adversarial NAS (GA-NAS), a provably converging search strategy for NAS problems, based on generative adversarial learning and the importance sampling framework. 
Based on extensive search experiments performed on NAS-Bench-101, 201, and 301 benchmarks, we demonstrate the superiority of GA-NAS as compared to a wide range of existing NAS methods, in terms of the best architectures discovered, convergence speed, scalability to a large search space, and more importantly, the stability and insensitivity to random seeds.
We also show the capability of GA-NAS to improve a given practical architecture, including already optimized ones either manually designed or found by other NAS methods, and its ability to search under ad-hoc constraints. GA-NAS improves EfficientNet-B0 by generating architectures with higher accuracy or lower numbers of parameters, and improves ProxylessNAS-GPU with enhanced accuracies. 
These results demonstrate the competence of GA-NAS as a stable and reproducible search algorithm for NAS.

\bibliographystyle{named}
\bibliography{ijcai21}

\clearpage
\appendix
\section{Appendix}\label{sec:sup}
\subsection{Importance Sampling and the Cross-Entropy Method}
\label{sec:crossEntropy}
Assume $X$ is a random variable, taking values in $\mathcal X$ and has a prior probability density function (pdf) $p(.;\bar\lambda)$ for fixed $\bar\lambda\in \Theta$.
Let $S(X)$ be the objective function to be maximized, and $\alpha$ be a level parameter. 
Note that the event $\mathcal E:=\left\{S(X)\geq \alpha\right\}$ is a \textit{rare event} for an $\alpha$ that is equal to or close to the optimal value of $S(X)$.
The goal of \emph{rare-event} probability estimation is to estimate 
\begin{eqnarray}
    l(\alpha) &:=& \mathbb{P}_{\bar\lambda}\left(S(X) \geq \alpha\right)\nonumber\\ &=&\mathbb{E}_{\bar\lambda}\left[\mathbb{I}_{S(X)\geq\alpha}\right] = \int \mathbb{I}_{S(x)\geq\alpha} ~p(x;\bar\lambda)dx,\nonumber
\end{eqnarray}
where $\mathbb{I}_{x\in \mathcal E}$ is an indicator function that is equal to 1 if $x\in \mathcal E$ and 0 otherwise. 

In fact, $l(\alpha)$ is called the \emph{rare-event} probability (or expectation) which is very small, e.g., less than $10^{-4}$.
The general idea of \emph{importance sampling} is to estimate the above rare-event probability $l$ by drawing samples $x$ from important regions of the search space with both a large density $p(x;\bar\lambda)$ and a large $\mathbb{I}_{S(x)\geq\alpha}$, i.e., $\mathbb{I}_{S(x)\geq\alpha}=1$. In other words, we aim to find $x$ such that $S(x)\geq\alpha$ with high probability. 

Importance sampling estimates $l$ 
by sampling $x$ from a distribution $q^*(x,\alpha;\bar\lambda)$ 
that should be proportional to
$\mathbb{I}_{S(x)\geq\alpha}p(x;\bar\lambda)$.
 Specifically, define the proposal sampling density $q(x)$ as a function such that $q(x) = 0$ implies $\mathbb{I}_{S(x)\geq\alpha} p(x;\bar\lambda) = 0$ for every $x$. 
Then we have
\begin{equation}\label{eq:importance1}
    l(\alpha)  = \int \frac{\mathbb{I}_{S(x)\geq\alpha} p(x;\bar\lambda)}{q(x)}q(x)dx = \mathbb{E}_q\left[\frac{\mathbb{I}_{S(X)\geq\alpha} p(X;\bar\lambda)}{q(X)}\right].
\end{equation}
\cite{rubinstein2013cross} shows that the optimal importance sampling probability density $q$ which minimizes the variance of the empirical estimator of $l(\alpha)$ is the density of $X$ conditional on the event $S(X) \geq \alpha$, that is 
\begin{equation}\label{eq:CE_Opt1}
    q^*(x,\alpha;\bar\lambda) = \frac{p(x;\bar\lambda)\mathbb{I}_{S(x)\geq\alpha}}{\int \mathbb{I}_{S(x)\geq\alpha} ~p(x;\bar\lambda)dx}.
\end{equation}

However, noting that the denominator of (\ref{eq:CE_Opt1}) is the quantity $l(\alpha)$ that we aim to estimate in the first place, we cannot obtain an explicit form of $q^*(.)$ directly. 
To overcome this issue, the CE method \cite{homem2002rare} aims to choose the sampling probability density $q(.)$ from the parametric families of densities $\{p(.;\theta)\}$ such that the Kullback-Leibler (KL)-divergence between the optimal importance sampling probability density $q^*(.,\alpha;\bar\lambda)$, given by (\ref{eq:CE_Opt1}), and $p(.;\theta)$ is minimized. 




The \emph{CE-Optimal solution} $\theta^*$ is given by
\begin{eqnarray}\label{eq:opt}
\theta^* &=& \arg\min_{\theta} KL\left(q^*(.,\alpha;\bar\lambda)||p(.;\theta)\right) \nonumber\\
&=& \arg\max_\theta\mathbb{E}_{\bar\lambda} \left[\mathbb{I}_{S(X)\geq \alpha}\ln p(X;\theta)\right]\nonumber\\
&=& \arg\max_\theta \mathbb{E}_{\Tilde{\theta}} \left[\mathbb{I}_{S(X)\geq \alpha}\ln p(X;\theta) \frac{p(X;\bar\lambda)} {p(X;\Tilde{\theta})}\right],
\end{eqnarray}
given any prior sampling distribution parameter $\Tilde{\theta}\in\Theta$.
Let $x_1,\ldots,x_N$ be  i.i.d samples from $p(.;\Tilde{\theta})$. Then an empirical estimate of $\theta^*$ is given by
\begin{equation}\label{eq:emp_opt}
    \hat{\theta^*} = \arg\max_\theta \frac{1}{N}\sum_{k = 1}^N\mathbb{I}_{S(x_k)\geq \alpha}\frac{p(x_k;\bar\lambda)}{p(x_k;\Tilde{\theta})} \ln p(x_k;{\theta}).
\end{equation}

In the case of rare events, where $l(\alpha) \leq 10^{-6}$, solving the optimization problem (\ref{eq:emp_opt}) does not help us estimate the probability of the rare event (see \cite{homem2002rare}). Algorithm~\ref{alg:alg1} presents the \emph{multi-stage} CE approach proposed in \cite{homem2002rare} to overcome this problem.
This iterative CE algorithm essentially creates a sequence of sampling probability densities $ p(.;\theta_1), p(,;\theta_2),\ldots$ that are steered toward the direction of the theoretically optimal density $q^*(.,\alpha;\bar\lambda)$ in an iterative manner.

\begin{algorithm}[h]
	\caption{CE Method for rare-event estimation} \label{alg:alg1}
	\begin{algorithmic}[1]
		\State\textbf{Input:} Level parameter $\alpha > 0 $, some fixed parameter $\delta > 0$, and initial 
	    parameters $\bar\lambda$ and $\rho$.
	    \State $t\leftarrow 1$, $\rho_0\leftarrow\rho$, $\theta_0\leftarrow\bar\lambda$
	    \While {$\gamma(\theta_{t-1}, \rho_{t-1}) < \alpha$}
	        \State $\gamma_{t-1} \leftarrow  \min (\alpha, \gamma(\theta_{t-1}, \rho_{t-1}))$
	        \State \label{step:kl} \begin{equation}
	        \begin{multlined}[c]
	        \theta_t \in \arg\max_{\theta\in\Theta} \mathbb{E}_{\bar\lambda} \left[\mathbb{I}_{S(X) \geq \gamma_{t-1}}\ln p(X;\theta)\right]\nonumber\\
	        = \arg\min_{\theta\in\Theta} KL(q^*(.,\gamma_{t-1};\bar\lambda) || p(.;\theta))
	        \end{multlined}
	        \end{equation}
	        \State Set $\rho_t$ such that $\gamma(\theta_t, \rho_t) \geq \min(\alpha, \gamma(\theta_{t-1}, \rho_{t-1}) + \delta)$.
	        \State $t\leftarrow t+1$.
	        
	    \EndWhile
	\end{algorithmic}
\end{algorithm}

More precisely, in Algorithm~\ref{alg:alg1}, we denote the threshold sequence by ${\gamma_t}$ for $t\geq 0$, and sampling distribution parameters by $\theta_t$ for $t\geq 0$. 
Initially, choose $\rho$ and $\gamma(\bar \lambda,\rho)$ so that $\gamma(\bar \lambda,\rho)$ is the $(1-\rho)$-quantile of $S(X)$ under the density $p(.;\bar\lambda)$, and generally, let $\gamma(\theta_t, \rho_t)$ be the $(1-\rho_t)$-quantile of $S(X)$ under the sampling density $p(x;\theta_t)$ of iteration $t$.
Furthermore, the \textit{a priori} sampling density parameter $\Tilde{\theta}$ introduced in (\ref{eq:opt}) and (\ref{eq:emp_opt}) is replaced by the optimum parameter obtained in iteration $t-1$, i.e., $\theta_{t-1}$.



From (\ref{eq:opt}), we notice that Step 4 in Algorithm~\ref{alg:alg1} is equivalent to minimizing the KL-divergence between the following two densities, $q(x, \gamma_{t-1};\bar\lambda) = c^{-1}\mathbb{I}_{S(x)\geq\gamma_{t-1}}p(x;\bar\lambda)$ and $p(x;\theta)$, where $c = \int \mathbb{I}_{S(x)\geq\gamma_{t-1}}p(x;\bar\lambda)dx$, $\gamma_{t-1} = \min (\alpha, \gamma(\theta_{t-1}, \rho_{t-1}))$.
One can always choose a small $\delta$ in Step 5, which will determine the number of times the while loop is executed.



The following theorem which was first proved by \cite{homem2002rare} asserts that Algorithm~\ref{alg:alg1} terminates. We provide a proof of Theorem~\ref{thm:Thm1} as follows. 
\begin{theorem}\label{thm:Thm1}
%
Assume that $l(\alpha)>0$.
Then, Algorithm~\ref{alg:alg1} converges to a CE-optimal solution in a finite number of iterations.
\end{theorem}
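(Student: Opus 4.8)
The plan is to show that the level sequence generated inside the while--loop of Algorithm~\ref{alg:alg1} climbs toward the target $\alpha$ in increments of at least the fixed step $\delta$, so that the loop is entered only finitely many times, and that the cross--entropy update (Step~4) performed once the level reaches $\alpha$ is, by~(\ref{eq:opt}), exactly the CE--optimal parameter $\theta^{*}$.

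First I would isolate the fact that keeps every iteration well posed. Since $l(\alpha)>0$ and every clipped level satisfies $\gamma_{t-1}\le\alpha$, monotonicity of the tail probability $u\mapsto l(u)=\mathbb{P}_{\bar\lambda}(S(X)\ge u)$ gives $l(\gamma_{t-1})\ge l(\alpha)>0$; hence the conditional density $q^{*}(\cdot,\gamma_{t-1};\bar\lambda)$ of~(\ref{eq:CE_Opt1}) is well defined, and a one--line computation from its definition yields
\[
KL\bigl( q^{*}(\cdot,\gamma_{t-1};\bar\lambda) \,\|\, p(\cdot;\bar\lambda) \bigr) = -\ln l(\gamma_{t-1}) < \infty .
\]
Therefore, since $\bar\lambda\in\Theta$, the infimum over $\theta\in\Theta$ in Step~4 is at most $-\ln l(\gamma_{t-1})<\infty$, so the chosen $\theta_{t}$ has $KL\bigl(q^{*}(\cdot,\gamma_{t-1};\bar\lambda)\,\|\,p(\cdot;\theta_{t})\bigr)<\infty$, which forces $q^{*}(\cdot,\gamma_{t-1};\bar\lambda)\ll p(\cdot;\theta_{t})$. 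Now the set $\{x:S(x)\ge\alpha,\ p(x;\bar\lambda)>0\}$ carries mass $l(\alpha)>0$ under $p(\cdot;\bar\lambda)$ and, because $\alpha\ge\gamma_{t-1}$, lies inside the support of $q^{*}(\cdot,\gamma_{t-1};\bar\lambda)$; absolute continuity then forces $\mathbb{P}_{\theta_{t}}(S(X)\ge\alpha)>0$. Consequently one can pick $\rho_{t}\in\bigl(0,\ \mathbb{P}_{\theta_{t}}(S(X)\ge\alpha)\bigr]$, and the corresponding $(1-\rho_{t})$--quantile then obeys $\gamma(\theta_{t},\rho_{t})\ge\alpha\ge\min(\alpha,\gamma_{t-1}+\delta)$, so Step~5 is always executable.

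Granting this feasibility, the termination argument is elementary. Step~5 enforces $\gamma(\theta_{t},\rho_{t})\ge\min\bigl(\alpha,\gamma(\theta_{t-1},\rho_{t-1})+\delta\bigr)$, so by induction $\gamma(\theta_{t},\rho_{t})\ge\min\bigl(\alpha,\ \gamma(\theta_{0},\rho_{0})+t\delta\bigr)$; once $t\ge\lceil(\alpha-\gamma(\theta_{0},\rho_{0}))/\delta\rceil$ the right side equals $\alpha$, the while--condition $\gamma(\theta_{t-1},\rho_{t-1})<\alpha$ fails, and the procedure halts after finitely many iterations. On the iteration at which the level first attains $\alpha$, the clipping $\gamma_{t-1}=\min(\alpha,\gamma(\theta_{t-1},\rho_{t-1}))=\alpha$ turns Step~4 into $\theta_{t}\in\arg\max_{\theta}\mathbb{E}_{\bar\lambda}\bigl[\mathbb{I}_{S(X)\ge\alpha}\ln p(X;\theta)\bigr]$, which by~(\ref{eq:opt}) is precisely the CE--optimal solution; this proves the theorem.

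The step I expect to take the most care is the feasibility of Step~5: one must rule out the cross--entropy update collapsing all of $p(\cdot;\theta_{t})$'s mass onto $\{S(X)<\alpha\}$, and the whole argument leans on the hypothesis $l(\alpha)>0$ precisely through the finiteness of the KL objective displayed above. A secondary, more clerical point is the quantile bookkeeping --- the distinction between weak and strict $(1-\rho)$--quantiles and the monotonicity/continuity of $\rho\mapsto\gamma(\theta,\rho)$ --- needed to convert $\mathbb{P}_{\theta_{t}}(S(X)\ge\alpha)>0$ into an admissible choice of $\rho_{t}$; also, if the $\arg\max$ defining $\theta_{t}$ is not attained, any near--optimal $\theta_{t}$ with finite KL still suffices, since only absolute continuity is used.
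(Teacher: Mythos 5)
Your proof is correct and follows the same skeleton as the paper's: establish that Step 5 of Algorithm~\ref{alg:alg1} is always executable (i.e., that the new sampling density puts positive mass on $\{S(X)\geq\alpha\}$, so some $\rho_t$ makes $\gamma(\theta_t,\rho_t)\geq\alpha$), deduce that the level rises by at least $\delta$ per iteration so the loop runs at most $\lceil(\alpha-\gamma_0)/\delta\rceil$ times, and observe that the terminal update is exactly the CE-optimal problem at level $\alpha$. The one substantive difference lies in how the key feasibility claim is justified. The paper asserts that $\rho_\alpha:=P(S(X)\geq\alpha;\theta_t)>0$ ``can be shown by induction,'' offering only the remark that it is immediate under a blanket positivity assumption $p(x;\theta)>0$, and then uses a quantile contradiction to get $\gamma(\theta_t,\rho_t)\geq\alpha$ for $\rho_t\in(0,\rho_\alpha)$. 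You instead prove $\rho_\alpha>0$: since $\bar\lambda\in\Theta$ gives the Step-4 objective the finite value $-\ln l(\gamma_{t-1})\leq-\ln l(\alpha)<\infty$, any (near-)minimizer $\theta_t$ has finite $KL\bigl(q^*(\cdot,\gamma_{t-1};\bar\lambda)\,\|\,p(\cdot;\theta_t)\bigr)$, hence $q^*\ll p(\cdot;\theta_t)$, and because $\{S\geq\alpha\}$ carries $q^*$-mass $l(\alpha)/l(\gamma_{t-1})>0$ it must carry positive $p(\cdot;\theta_t)$-mass. This closes the gap the paper leaves open, isolates exactly where the hypothesis $l(\alpha)>0$ enters, and is the stronger write-up of this step; the paper's version, by contrast, needs no structure on $\Theta$ beyond the unproven induction. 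One shared caveat: as Algorithm~\ref{alg:alg1} is literally written, the while-loop exits as soon as $\gamma(\theta_{t-1},\rho_{t-1})\geq\alpha$, so the Step-4 update at level exactly $\alpha$ that both you and the paper invoke for CE-optimality requires reading the algorithm as performing one final update upon termination (the standard CE convention); this is not a defect of your argument relative to the paper's.
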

\begin{proof}
Let $t$ be an arbitrary iteration of the algorithm, and let $\rho_\alpha := P(S(X)\geq \alpha;\theta_t)$. 
It can be shown by induction that  $\rho_\alpha > 0$. Note that this is also true if $p(x;\theta)>0$ for all \(\theta,x\).
Next, we show that for every $\rho_t\in(0, \rho_\alpha)$, $\gamma(\theta_t, \rho_t) \geq \alpha$. By the definition of $\gamma$, we have 
\begin{eqnarray}\label{eq:inequalities1}
&&P(S(X) \geq \gamma(\theta_t, \rho_t);\theta) \geq \rho_t,\nonumber\\
&&P(S(X) \leq \gamma(\theta_t, \rho_t);\theta) \geq 1 - \rho_t > 1 - \rho_\alpha.
\end{eqnarray}
Suppose by contradiction, that $\gamma(\theta_t, \rho_t) < \alpha$. Then, we get
\begin{align*}
    P(S(X) \leq \gamma(\theta_t, \rho_t);\theta^*) \leq P(S(X) \leq \alpha; \theta_t) = 1 - \rho_\alpha,
\end{align*}
which contradicts (\ref{eq:inequalities1}). Thus, $\gamma(\theta_t, \rho_t) \geq \alpha$. This implies that Step 5 can always be carried out for any $\delta>0$.
Consequently, Algorithm~\ref{alg:alg1} terminates after $T$ iterations with $T\leq \ceil{\alpha / \delta}$. Thus at time $T$, we have 
\begin{equation*}
\theta_T \in \argmax_{\theta\in\Theta} \mathbb{E}_{\bar\lambda}\left[\mathbb{I}_{S(X)\geq \alpha}\ln P(X;\theta)\right],    
\end{equation*}
which implies that $\theta_T$ is a CE-Optimal solution. Therefore, Algorithm~\ref{alg:alg1} converges to a CE-Optimal solution in a finite number of iterations.
\end{proof}

\subsection{Proof of Theorem~\ref{thm:main_theorem}} \label{sec:proof}
\begin{proof}[Proof of Theorem \ref{thm:main_theorem}]
Let $p(x;\bar\lambda)$ be the initial sampling distribution over the initial set of architectures $\mathcal X_0$. We can consider $\gamma_0$ to be the $(1 - \rho)$-quantile under $p(x;\bar\lambda)$. Then, we can consider $\rho$, $\delta > 0$ and $\bar\lambda$ to be the inputs to Algorithm \ref{alg:alg1} of Section \ref{sec:crossEntropy}. Furthermore, by having $\gamma_t >= \min(\alpha, \gamma_{t-1} + \delta)$, we can view $\gamma_t$ as the $(1-\rho_t)$-quantile under the sampling density $p(x;\theta_t)$ provided by the generator $G(.;\theta_t)$ at the $t$-th iteration, for all $t\in\{1,\ldots,T\}$. 

We now modify Algorithm \ref{alg:alg1} of Section \ref{sec:crossEntropy} by replacing the KL-divergence minimization of Step \ref{step:kl} of Algorithm \ref{alg:alg1} with minimizing the Jensen-Shannon (JS)-divergence to obtain Algorithm \ref{alg:alg2_modified} for JS-divergence minimization for rare event simulation. Note that Step \ref{step:js} of Algorithm \ref{alg:alg2_modified} is equivalent to Step \ref{step:minimax} of Algorithm \ref{alg:alg3}. Furthermore, the distribution of architectures in $\mathcal T$, i.e., the distribution of top architectures in prior iteration in Algorithm \ref{alg:alg3}, corresponds to $q(x,\gamma_{t-1};\bar\lambda)$ in Algorithm~\ref{alg:alg2_modified}. By choosing $k$ such that $k=\rho_t\times|\bigcup_{i=0}^{t-1}\mathcal X_i|$ and the above argument, we note that Algorithm \ref{alg:alg2_modified} and Algorithm \ref{alg:alg3} are equivalent. Therefore, by showing that Algorithm \ref{alg:alg2_modified} converges in a finite number of iterations, we imply that Algorithm \ref{alg:alg3} converges in a finite number of iterations.

Using the convergence result of Algorithm \ref{alg:alg1}, we now show that Algorithm \ref{alg:alg2_modified} converges in a finite number of iterations.

Consider the JS divergence between $q(x, \gamma_{t-1};\bar\lambda) = c^{-1}\mathbb{I}_{S(x)\geq\gamma_{t-1}}p(x;\bar\lambda)$ and $p(x;\theta)$, where $c = \int \mathbb{I}_{S(x)\geq\gamma_{t-1}}p(x;\bar\lambda)dx$, $\gamma_{t-1} = \min (\alpha, \gamma(\theta_{t-1}, \rho_{t-1}))$, which is
\begin{eqnarray}\label{eq:JS1}
  &&JS(q(x, \gamma_{t-1};\bar\lambda)||p(x;\theta)) = \nonumber\\ &&\frac{1}{2}KL (c^{-1}p(x;\bar\lambda)\mathbb{I}_{S(X)\geq\gamma_{t-1}} \parallel \nonumber\\ &&\frac{1}{2}\left(c^{-1}p(x;\bar\lambda)\mathbb{I}_{S(X)\geq\gamma_{t-1}} + p(x;\theta))\right) + \nonumber\\
  &&\frac{1}{2}KL (p(x;\theta) \parallel \frac{1}{2} (c^{-1}p(x;\bar\lambda)\mathbb{I}_{S(X)\geq\gamma_{t-1}} + \nonumber\\
  && p(x;\theta))).
\end{eqnarray}
We say $\theta^*$ is a \emph{JS-optimal solution}, if 
\begin{align}\label{eq:JS_Opt}
    \theta^* \in \arg\min_{\theta\in\Theta} JS(q(x, \gamma_{t-1};\bar\lambda)||p(x;\theta)).
\end{align}

\begin{algorithm}[h]
	\caption{JS-divergence minimization for rare-event estimation} \label{alg:alg2_modified}
	\begin{algorithmic}[1]
		\State\textbf{Input:} Level parameter $\alpha > 0 $, some fixed parameter $\delta > 0$, and initial 
	    parameters $\bar\lambda$ and $\rho$.
	    \State $t\leftarrow 1$, $\rho_0\leftarrow\rho$, $\theta_0\leftarrow\bar\lambda$
	    \While {$\gamma(\theta_{t-1}, \rho_{t-1}) < \alpha$}
	        \State $\theta_t \in \arg\min_{\theta\in\Theta} JS(q(x, \gamma_{t-1};\bar\lambda)||p(x;\theta))$, where $\gamma_{t-1} = \min (\alpha, \gamma(\theta_{t-1}, \rho_{t-1}))$ and 
	        \[
	             q(x, \gamma_{t-1};\bar\lambda) = \frac{\mathbb{I}_{S(x)\geq\gamma_{t-1}}p(x;\bar\lambda)}{\int \mathbb{I}_{S(x)\geq\gamma_{t-1}}p(x;\bar\lambda)dx}. 
	        \]\label{step:js}
	        \State Set $\rho_t$ such that $\gamma(\theta_t, \rho_t) \geq \min(\alpha, \gamma(\theta_{t-1}, \rho_{t-1}) + \delta)$.
	        \State $t\leftarrow t+1$.
	   \EndWhile
	\end{algorithmic} 
\end{algorithm}

The assumption that $\max_{x\in\mathcal X}S(x) \geq \alpha$, for some $\alpha > 0$, implies the assumption of Theorem~\ref{thm:Thm1}, i.e., $l(\alpha) > 0$. 
Then, following the same arguments as in the proof of Theorem~\ref{thm:Thm1}, the assertion of Theorem~\ref{thm:Thm1} still holds for the case of JS-divergence minimization of Algorithm \ref{alg:alg2_modified}, and consequently, Algorithm~\ref{alg:alg3} converges to a JS-optimal solution in a finite number of iterations.  
\end{proof}

\subsection{Model Details}



\subsubsection{Pairwise Architecture Discriminator}
\label{sec:sup_d}
Consider the case where we search for a cell architecture $\mathcal C$, the input to $D$ is a pair of cells $(\mathcal C^{true}, \mathcal C')$, 
where $\mathcal C^{true}$ denotes a cell from the current truth data, while  $\mathcal C'$ is a cell from either the truth data or the generator's learned distribution.
We transform each cell in the pair into a graph embedding vector using a shared $k$-GNN model.
The graph embedding vectors of the input cells are then concatenated and passed to an MLP for classification. 
We train $D$ in a supervised fashion and minimize the cross-entropy loss between the target labels and the predictions. 
To create the training data for $D$, we first sample $\mathcal T$ unique cells from the current truth data. 
We create positive training samples by pairing each truth cell against every other truth cell.
We then let the generator generate $|\mathcal T|$ unique and valid cells as fake data and pair each truth cell against all the generated cells.

\subsubsection{Architecture Generator}
A complete architecture is constructed in an auto-regressive fashion. 
We define the state at the $t$-th time step as ${\mathcal C}_t$, which is a partially constructed graph. 
Given the state ${\mathcal C}_{t-1}$, the actor inserts a new node.
The action $a_t$ is composed of an operator type for this new node selected from a predefined operation space and its connections to previous nodes.
We define an episode by a trajectory $\tau$ of length $N$ as the state transitions from ${\mathcal C}_0$ to ${\mathcal C}_{N-1}$, i.e., $\tau=\{{\mathcal C}_0, {\mathcal C}_1, \ldots, {\mathcal C}_{N-1}\}$, where $N$ is an upper bound that limits the number of steps allowed in graph construction.
The actor model follows an Encoder-Decoder framework, where the encoder is a multi-layer $k$-GNN. 
The decoder consists of an MLP and a Gated Recurrent Unit.
The architecture construction terminates when the actor generates a terminal \emph{output} node or the final state ${\mathcal C}_{N-1}$ is reached.
Figure~\ref{fig:g_structure} illustrates an architecture generator used in our NAS-Bench-101 experiments.

\begin{figure*}[t]
	\centering
	\includegraphics[width=5.5in]{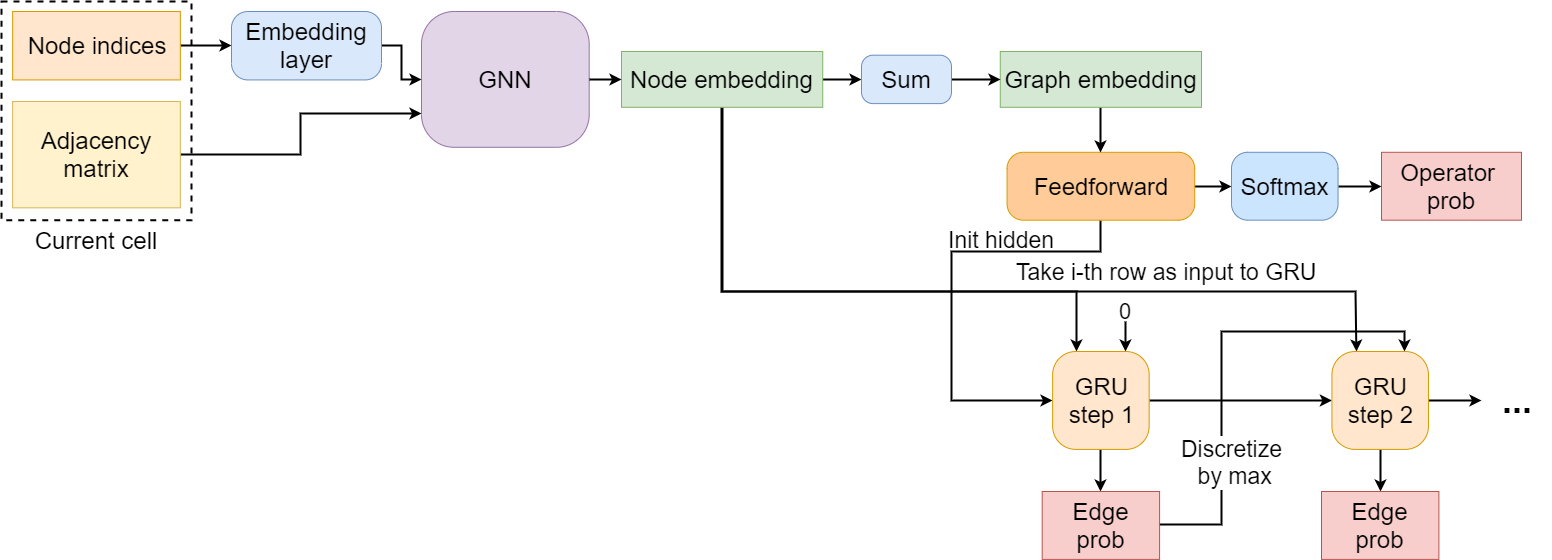}
	\caption{The structure of a GNN architecture generator for cell-based micro search.}
	\label{fig:g_structure}
\end{figure*}

\subsubsection{Training Procedure of the Architecture Generator}

The state transition is captured by a policy $\pi_\theta(a_t|{\mathcal C}_t)$, where the action $a_t$ includes predictions on a new node type and its connections to previous nodes. To learn $\pi_\theta(a_t|{\mathcal C}_t)$ in this discrete action space, we adopt the Proximal Policy Optimization (PPO) 
algorithm with generalized advantage estimation.
The actor is trained by maximizing the cumulative expected reward of the trajectory.
For a trajectory $\tau$ with a maximum allowed length of $N$, this objective translates to
\begin{equation}
\begin{split}
     \max E[R(\tau)] = \max E[R_{step}(\tau)] + E[R_{final}(\tau)]\\
     \quad\textrm{s.t. }|\tau| \leq N,
\end{split}
\end{equation}
where $R_{step}$ and $R_{final}$ correspond the per-step reward and the final reward, respectively, which will be described in the following.

In the context of cell-based micro search, there is a step reward $R_{step}$, which is given to the actor after each action, and a final reward $R_{final}$, which is only assigned at the end of a multi-step generation episode.
For $R_{step}$, we assign the generator a step reward of $0$ if $a_t$ is valid. Otherwise, we assign $-0.1$ and terminate the episode immediately.

$R_{final}$ consists of two parts. The first part is a validity score $R_v$. 
For a completed cell $\mathcal C_{gen}$, if it is a valid DAG and contains exactly one output node, and there is at least one path from every other node to the output, then the actor receives a validity score of $R_v(\mathcal C_{gen})=0$. Otherwise, the validity score will be $-0.1$ multiplied by the number of validity violations.
In our search space, we define four possible violations:
1) There is no output node; 2) There is a node with no incoming edges; 3) There is a node with no outgoing edges; 4) There is a node with no incoming or outgoing edges.
The second part of $R_{final}$ is
$R_D(\mathcal C_{gen})$, which represents the probability that the discriminator classifies $\mathcal C_{gen}$ as a cell from the truth data distribution $p_{data}(x)$.
In order to receive $R_D(\mathcal C_{gen})$ from the discriminator, $\mathcal C_{gen}$ must have a validity score of $0$ and $\mathcal C_{gen}$ cannot be one of the current truth cells $\{\mathcal C^j_{true}| j=1,2,...K\}$.
 
Formally, we express the final reward $R_{final}$ for a generated architecture $\mathcal C_{gen}$ as
\begin{equation}
    R_{final} = \begin{cases} R_v(\mathcal C_{gen}) &\mbox{if } I(C_{gen}), \\ 
    R_v(\mathcal C_{gen}) + R_D(\mathcal C_{gen}) & \mbox{otherwise. } \end{cases}
\end{equation}
And $I(C_{gen}) = R_v(\mathcal C_{gen}) < 0 \textrm{ or } \mathcal C_{gen}\in\{\mathcal C^j_{true}| j=1,2,...K\}$.
We compute $R_D(\mathcal C_{gen})$ by conducting pairwise comparisons against the current truth cells, then take the maximum probability that the discriminator will predict a $1$, i.e.,
$R_D=\max_j\textrm{ }P(\mathcal C_{gen} \in p_{data}(x)|\mathcal C_{gen}, \mathcal C^j_{true}; D),\textrm{ for } j=1,2,...K$, as the discriminator reward.

Maintaining the right balance of exploration/exploitation is crucial for a NAS algorithm. 
In GA-NAS, the architecture generator and discriminator provide an efficient way to utilize learned knowledge, i.e., exploitation. 
For exploration, we make sure that the generator always have some uncertainties in its actions by tuning the multiplier for the entropy loss in the PPO learning objective. 
The entropy loss determines the amount of randomness, hence, variations in the generated actions. Increasing its multiplier would increase the impact of the entropy loss term, which results in more exploration. 
In our experiments, we have tuned this multiplier extensively and found that a value of $0.1$ works well for the tested search spaces.

Last but not least, it is worth mentioning that the above formulation of the reward function also works for the single-path macro search scenario, such as EfficientNet and ProxylessNAS search spaces, in which we just need to modify $R_{step}$ and $R_D(\mathcal C_{gen})$ according to the definitions of the new search space.

\subsection{Experimentation}\label{sup:exp}
In this section, we present ablation studies and the results of additional experiments. We also provide more details on our experimental setup.
We implement GA-NAS using PyTorch 1.3. 
We also use the PyTorch Geometric library 
for the implementation of $k$-GNN.

\subsubsection{More Ablation Studies}\label{sec:ablation}
We report the results of ablation studies to show the effect of different components of GA-NAS on its performance.

\textbf{Standard versus Pairwise Discriminator}: In order to investigate the effect of pairwise discriminator in GA-NAS explained in section \ref{sec:algorithm_generator}, we perform the same experiments as in Table~\ref{table:nb101_avg_acc}, with the standard discriminator where each architecture $\in\mathcal T$ is compared against a generated architecture $\in\mathcal F$. The results presented in Table~\ref{table:nb101_disc_ablation} indicate that using pairwise discriminator leads to a better performance compared to using standard discriminator. 
\begin{table*}[h]
   \centering
   \small
	\begin{tabular}{|l|c|c|c|} \hline
          \textbf{Algorithm}    &  \textbf{Mean Acc} & \textbf{Mean Rank} &\textbf{Average \#Q}   \\ \hline
		\textbf{GA-NAS-Setup1 with pairwise discriminator} &\textbf{94.221}$\pm$ 4.45e-5 & 2.9 & \textbf{647.5} $\pm$ 433.43    \\
		GA-NAS-Setup1 without pairwise discriminator   &94.22$\pm$ 0.0 & 3 & 771.8 $\pm$ 427.51    \\
        \textbf{GA-NAS-Setup2 with pairwise discriminator}  &\textbf{94.227}$\pm$ 7.43e-5 & 2.5 & 1561.8 $\pm$ 802.13  \\
        GA-NAS-Setup2 without pairwise discriminator  &94.22$\pm$ 0.0 & 3 & 897 $\pm$ 465.20  \\
		\hline
	\end{tabular}
	\caption{\label{table:nb101_disc_ablation}  The mean accuracy and rank, and average number of queries over 10 runs.}
\end{table*}

\textbf{Uniform versus linear sample size increase with fixed number of evaluation budget}: Once the generator in Algorithm~\ref{alg:alg3} is trained, we sample $|\mathcal X_t|$ architectures $\mathcal X_t$. Intuitively, as the algorithm progresses, $G$ becomes more and more accurate, thus, increasing the size of $\mathcal X_t$ over the iterations should prove advantageous. We provide some evidence that this is the case. 
More precisely, we perform the same experiments as in Table~\ref{table:nb101_avg_acc}; however, keeping the total number of generated cell architectures during the 10 iterations of the algorithm the same as that in the setup of Table~\ref{table:nb101_avg_acc}, we generate a constant number of cell architectures of $225$ and $450$ in each iteration of the algorithm in setup1 and setup2, respectively. The results presented in Table~\ref{table:nb101_inc_ablation} indicate that a linear increase in the size of generated cell architectures leads to a better performance.
\begin{table*}[h]
   \centering
   \small
	\begin{tabular}{|l|c|c|c|} \hline
          \textbf{Algorithm}    &  \textbf{Mean Acc} & \textbf{Mean Rank} &\textbf{Average \#Q}   \\ \hline
		\textbf{GA-NAS-Setup1 ($|\mathcal X_t|=|\mathcal X_{t-1}|+50,~\forall t \geq 2$}) &\textbf{94.221} $\pm$ 4.45e-5 & \textbf{2.9} & \textbf{647.5} $\pm$ 433.43    \\
		GA-NAS-Setup1 ($|\mathcal X_t|=225,~\forall t \geq 2$) &94.21$\pm$ 0.000262 & 3.4 & 987.7 $\pm$ 394.79    \\
        \textbf{GA-NAS-Setup2 ($|\mathcal X_t|=|\mathcal X_{t-1}|+50,~\forall t \geq 2$})  &\textbf{94.227}$\pm$ 7.43e-5 & \textbf{2.5} & 1561.8 $\pm$ 802.13  \\
        GA-NAS-Setup2 ($|\mathcal X_t|=450,~\forall t \geq 2$)  &94.22$\pm$ 0.0 & 3 & 1127.6 $\pm$ 363.75  \\
		\hline
	\end{tabular}
	\caption{\label{table:nb101_inc_ablation}  The mean accuracy and rank, and average number of queries over 10 runs.}
\end{table*}

\subsubsection{Pareto Front Search Results on NAS-Bench-101}
In addition to constrained search, we search through Pareto frontiers to further illustrate our algorithm's ability to learn any given truth set.
We consider test accuracy vs. normalized training time and found that the truth Pareto front of NAS-Bench-101 contains 41 cells.
To reduce variance, we always initialize with the worst 50\% of cells in terms of accuracy and training time, which amounts to 82,329 cells.
We modify GA-NAS to take a Pareto neighborhood of size 4 in each iteration, which is defined as iteratively removing the cells in the current Pareto front and finding a new front using the remaining cells, until we have collected cells from 4 Pareto fronts.  
We run GA-NAS for $10$ iterations and compare with a random search baseline. 
GA-NAS issued 2,869 unique queries (\#Q) to the benchmark, so we set the \#Q for random search to 3,000 for a fair comparison. 
Figure~\ref{fig:nb101_pareto} showcases the effectiveness of GA-NAS in uncovering the Pareto front.
While random search also finds a Pareto front that is close to the truth, a small gap is still visible.
In comparison, GA-NAS discovers a better Pareto front with a smaller number of queries.
GA-NAS finds 10 of the 41 cells on the truth Pareto front, and random search only finds 2.

\begin{figure*}
\centering
\begin{subfigure}[b]{0.48\textwidth}
  \centering
  \includegraphics[width=\linewidth]{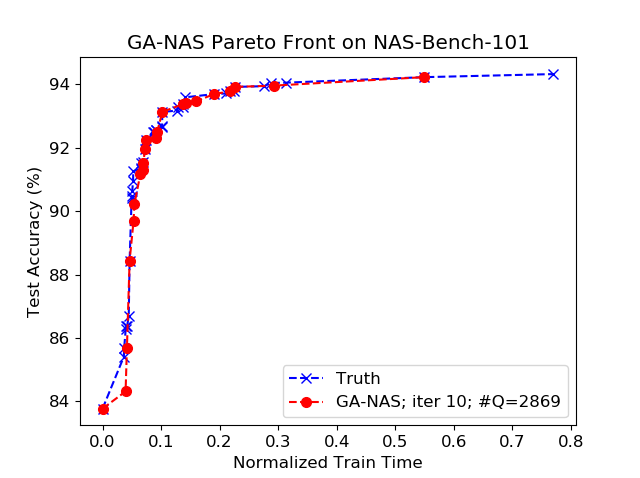}
  \label{fig:nb101_pareto_ga_nas}
\end{subfigure}
\begin{subfigure}[b]{0.48\textwidth}
  \centering
  \includegraphics[width=\linewidth]{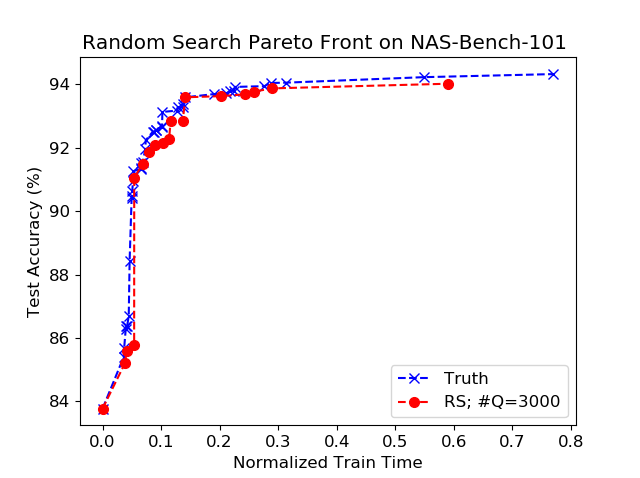}
  \label{fig:nb101_pareto_rs}
\end{subfigure}
\caption{Pareto Front search on NAS-Bench-101. Observe that GA-NAS nearly recovers the truth Pareto Front while Random Search (RS) only discovers one close to the truth. \#Q represents the total number of unique queries made to the benchmark. Each marker represents a cell on the Pareto front.}
\label{fig:nb101_pareto}
\end{figure*}

\subsubsection{Hyper-parameter Setup}
Table~\ref{table:h_params} reports the key hyper-parameters used in our NAS-Bench-101, 201 and 301 experiments, which includes (1) best-accuracy search by querying the true accuracy (Acc), (2) best-accuracy search using the supernet (Acc-WS), (3) constrained best-accuracy search (Acc-Cons), and (4) Pareto front search (Pareto).

Here, we include a brief description for some parameters in Table~\ref{table:h_params}.
\begin{itemize}
  \item \textit{\# Eval base ($|\mathcal X_1|$)} is the number of unique, valid cells to be generated by $G$ after the first iteration of $D$ and $G$ training, i.e., last step of an iteration of Algorithm.~\ref{alg:alg3}.
  \item \textit{\# Eval inc ($|\mathcal X_t| - |\mathcal X_{t-1}|$)} is the increment to the number of generated cells after completing an iteration. From the CE method and importance sampling described in Sec.~\ref{sec:crossEntropy}, the early generator distribution is not close enough to the well-performing cell architecture. To lower the number of queries to the benchmark without sacrificing the performance, we propose an incremental increase in the number of generated cell architectures at each iteration.
  \item \textit{Init method} describes how to choose the initial set of cells, from which we create the initial truth set. For most experiments, we randomly choose $|\mathcal{X}_0|$ number of cells as the initial set. For the Pareto front search, we initialize with cells that rank in the lower 50\% in terms of both test accuracy and training time (which constitutes 82,329 cells in NAS-Bench-101). 
  \item \textit{Truth set size ($\mathcal{T}$)} controls the number of truth cells for training $D$. For best-accuracy searches, we take the top most accurate cells found so far. For Pareto front searches, we iteratively collect the cells on the current Pareto front, then remove them from the current pool, then find a new Pareto front, until we visit the desired number of Pareto fronts. 
\end{itemize}
For the complete set of hyper-parameters, please check out our code.

\begin{table*}[t]
	\centering
	\small
	\scalebox{0.99}{
	\begin{threeparttable}[hb]
	\caption{Key hyper-parameters used by our NAS-Bench-101, 201 and 301 experiments. Among multiple runs of an experiment, the same hyper-parameters are used and only the random seed differs.}
	\label{table:h_params}
	\begin{tabular}{|l||c|c|c|c||c||c|} \hline
	    & \multicolumn{4}{c||}{\textbf{NAS-Bench-101}} & \multicolumn{1}{c||}{\textbf{NAS-Bench-201}} & \multicolumn{1}{c|}{\textbf{NAS-Bench-301}}\\ \hline
        \textbf{Parameter} & \textbf{Acc (2 setups)} & \textbf{Acc-WS} & \textbf{Acc-Cons} & \textbf{Pareto} & \textbf{Acc (3 datasets)} & \textbf{Acc}\\ \hline
        $G$ optimizer                     & Adam & Adam & Adam & Adam & Adam & Adam \\
        $G$ learn rate                    & 0.0001 & 0.0001 & 0.0001 & 0.0001 & 0.0001 & 0.0001              \\
        $D$ optimizer                     & Adam & Adam & Adam & Adam & Adam & Adam                    \\
        $D$ learn rate                    & 0.001 & 0.001 & 0.001 & 0.001 & 0.001  & 0.001                \\
        \# GNN layers                     & 2 & 2 & 2 & 2 & 2 & 2                              \\
		Iterations ($T$)                  & 10 & 5 & 5 & 10 & 5 & 10                                     \\
		\# Eval base ($|\mathcal{X}_1|$)  & 100 & 100 & 200 & 500 & 60 & 50                             \\
		\# Eval inc ($|\mathcal{X}_t| - |\mathcal{X}_{t-1}|$) & 50, 100 & 100 & 200 & 100 & 10 & 50                              \\
		Init method                       & Random & Random & Random & Worst 50\% & Random & Random                \\
		Init size ($|\mathcal{X}_0|$)     & 50; 100 & 100 & 100 & 82329 & 60 & 100                            \\
		Truth set size ($|\mathcal{T}|$)  & 25; 50  & 50  & 50  & 4 fronts & 40 & 50                          \\
		\hline
	\end{tabular}
	\end{threeparttable}
	}
\end{table*}

\subsubsection{Supernet Training and Usage}
\label{sup:exp_supernet}
To train a supernet for NAS-Bench-101, we first set up a macro-network in the same way as the evaluation network of NAS-Bench-101. 
Our supernet has 3 stacks with 3 supercells per stack. 
A downsampling layer consisting of a Maxpooling 2-by-2 and a Convolution 1-by-1 is inserted after the first and second stack to halve the input width/height and double the channels.
Each supercell contains 5 searchable nodes.
In a NAS-Bench-101 cell, the output node performs concatenation of the input features. However, this is not easy to handle in a supercell. 
Therefore, we replace the concatenation with summation and do not split channels between different nodes in a cell.

We train the supernet on 40,000 randomly sampled CIFAR-10 training data and leave the other 10,000 as validation data.
We set an initial channel size of 64 and a training batch size of 128.
We adopt a uniformly random strategy for training, i.e., for every batch of training data, we first uniformly sample an edge topology, then, we uniformly sample one operator type for every searchable node in the topology.
Following this strategy, we train for 500 epochs using the Adam optimizer and an initial learning rate of 0.001.
We change the learning rate to 0.0001 when the training accuracy do not improve for 50 consecutive epochs.

During search, after we acquire the set of cells ($\mathcal{X}$) to evaluate, we first fine-tune the current supernet for another 50 epochs. 
The main difference here compared to training a supernet from scratch is that for a batch of training data, we randomly sample a cell from $\mathcal{X}$ instead of from the complete search space.
We use the Adam optimizer with a learning rate of 0.0001.
Then, we get the accuracy of every cell in $\mathcal{X}$ by testing on the validation data and by inheriting the corresponding weights of the supernet.

\subsubsection{ResNet and Inception Cells in NAS-Bench-101}
Figure~\ref{fig:res_net_inception_cells} illustrates the structures of the ResNet and Inception cells used in our constrained best-acc search.
Note that both cells are taken from the NAS-Bench-101 database as is, there might be small differences in their structures compared to the original definitions.
Table~\ref{table:nb101_cons_acc_search} reports that the ResNet cell has a lot more weights than the inception cell, even though it contains fewer operators. This is because NAS-Bench-101 performs channel splitting so that each operator in a branched path will have a much fewer number of trainable weights.

We then present the two best cell structures found by GA-NAS that are better than the ResNet and Inception cells, respectively, in Figure~\ref{fig:res_net_inception_repalce_cells}. 
Both cells are also in the NAS-Bench-101 database.
Observe that both cells contain multiple branches coming from the input node.

\begin{figure*}
\centering
\begin{subfigure}[b]{0.2\textwidth}
  \centering
  \includegraphics[width=\linewidth]{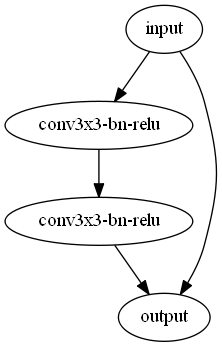}
  \caption{ResNet cell}
  \label{fig:resnet_cell}
\end{subfigure}
\begin{subfigure}[b]{0.75\textwidth}
  \centering
  \includegraphics[width=0.8\linewidth]{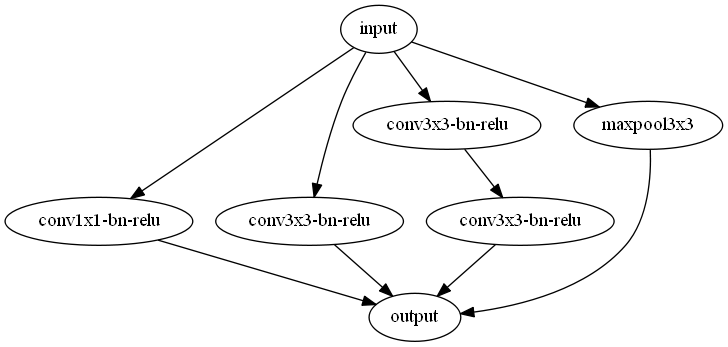}
  \caption{Inception cell}
  \label{fig:inception_ell}
\end{subfigure}
\caption{Structures of the ResNet and Inception cells, which are considered hand-crafted architectures in our constrained best-accuracy search.}
\label{fig:res_net_inception_cells}
\end{figure*}

\begin{figure*}
\centering
\begin{subfigure}[b]{0.5\textwidth}
  \centering
  \includegraphics[width=\linewidth]{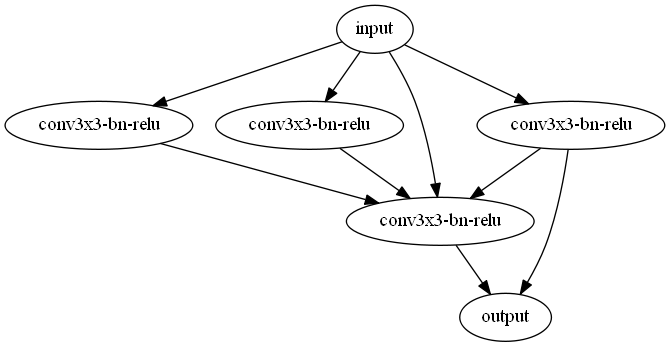}
  \caption{ResNet cell replacement.}
  \label{fig:resnet_replace}
\end{subfigure}
\begin{subfigure}[b]{0.45\textwidth}
  \centering
  \includegraphics[width=\linewidth]{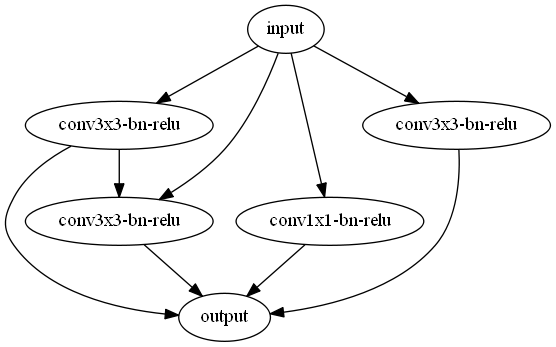}
  \caption{Inception cell replacement.}
  \label{fig:inception_replace}
\end{subfigure}
\caption{The structures of the best two cells found by GA-NAS that are better than the ResNet and Inception cells in terms of test accuracy, training time and number of weights.}
\label{fig:res_net_inception_repalce_cells}
\end{figure*}

\subsubsection{More on NAS-Bench-101 and NAS-Bench-201}
We summarize essential statistical information about NAS-Bench-101 and NAS-Bench-201 in Table~\ref{table:nas_bench_info}. 
The purpose is to establish a common ground for comparisons.
Future NAS works who wish to compare to our experimental results directly can check Table~\ref{table:nas_bench_info} to ensure a matching benchmark setting.
There are 3 candidate operators for NAS-Bench-101, (1) A sequence of convolution 3-by-3, batch normalization (BN), and ReLU activation (Conv3$\times$3-BN-ReLU), (2) Conv1$\times$1-BN-ReLU, (3) Maxpool3$\times$3.
NAS-Bench-201 defines 5 operator choices: (1) zeroize, (2) skip connection, (3) ReLU-Conv1$\times$1-BN, (4) ReLU-Conv3$\times$3-BN, (5) Averagepool3$\times$3. 
We want the re-emphasize that for each cell in NAS-Bench-101, we take the average of the final test accuracy at epoch 108 over 3 runs as its true test accuracy.
For NAS-Bench-201, we take the labeled accuracy on the test sets.
Since there is a single edge topology for all cells in NAS-Bench-201, there is no need to predict the edge connections; hence, we remove the GRU in the decoder and only predict the node types of 6 searchable nodes.

\begin{table*}[t]
	\centering
	\small
	\scalebox{0.99}{
	\begin{threeparttable}[hb]
	\begin{tabular}{|l||c||c|c|c|} \hline
                & \textbf{NAS-Bench-101} & \multicolumn{3}{c|}{\textbf{NAS-Bench-201}}  \\ \hline
        Dataset & CIFAR-10 & CIFAR-10 & CIFAR-100 & ImageNet-16-120 \\ \hline
        \# Cells & 423,624 & 15,625 & 15,625 & 15,625             \\
        Highest test acc & 94.32 & 94.37 & 73.51 & 47.31           \\
        Lowest test acc & 9.98 & 10.0 & 1.0 & 0.83                  \\
        Mean test acc & 89.68 & 87.06 & 61.39 & 33.57                \\
        \# Operator choices & 3 & 5 & 5 & 5 \\
        \# Searchable nodes per cell & 5 & 6 & 6 & 6 \\
        Max \# edges per cell & 9 & - & - & - \\
		\hline
	\end{tabular}
	\end{threeparttable}
	}
	\caption{Key information about the NAS-Bench-101 and NAS-Bench-201 benchmarks used in our experiments.}
	\label{table:nas_bench_info}
\end{table*}

\subsubsection{Clarification on Cell Ranking}
We would like to clarify that all ranking results reported in the paper are based on the \textit{un-rounded} true accuracy.
In addition, if two cells have the same accuracy, we randomly rank one before the other, i.e. no two cells will have the same ranking.

\subsubsection{Conversion from DARTS-like Cells to the NAS-Bench-101 Format}
DARTS-like cells, where an edge represents a searchable operator, and a node represents a feature map that is the sum of multiple edge operations, can be transformed into the format of NAS-Bench-101 cells, where nodes represent searchable operators and edges determine data flow.
For a unique, discrete cell, we assume that each edge in a DARTS-like cell can adopt a single unique operator.
We achieve this transformation by first converting each edge in a DARTS-like cell to a NAS-Bench-101 node. 
Next, we construct the dependency between NAS-Bench-101 nodes from the DARTS nodes, which enables us to complete the edge topology in a new NAS-Bench-101 cell.
Figure~\ref{fig:cell_transformation} shows a DARTS-like cell defined by NAS-Bench-201 and the transformed NAS-Bench-101 cell.
This transformation is a necessary first step to make GA-NAS compatible with NAS-Bench-201 and NAS-Bench-301.
Notice that every DARTS-like cell has the same edge topology in the NAS-Bench-101 format, which alleviates the need for a dedicated edge predictor in the decoder of $G$.

\begin{figure*}[t]
	\centering
	\includegraphics[width=4in]{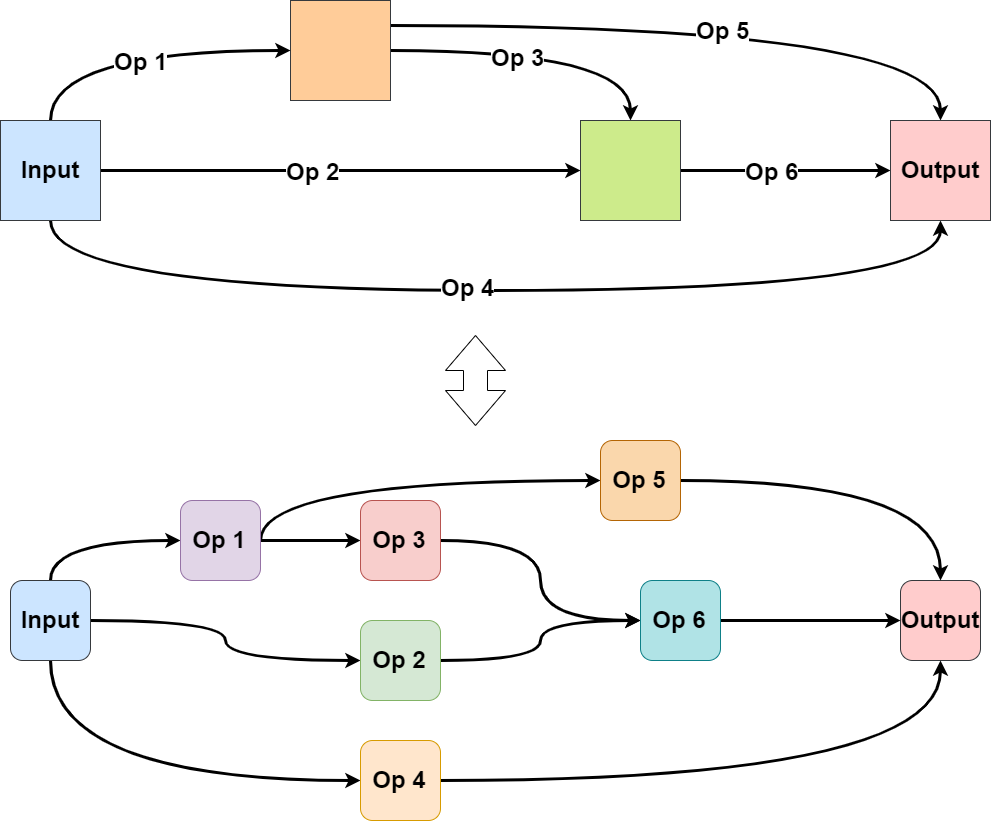}
	\caption{Illustration of how to transform a DARTS-like cell (top) in NAS-Bench-201 to the NAS-Bench-101 format (bottom).}
	\label{fig:cell_transformation}
\end{figure*}

\subsubsection{EfficientNet and ProxylessNAS Search Space}
For EfficientNet experiment, we take the EfficientNet-B0 network structure as the backbone and define 7 searchable locations, as indicated by the TBS symbol in Table~\ref{table:eb0_search_space}. 
We run GA-NAS to select a type of mobile inverted bottleneck MBConv 
block. We search for different expansion ratios \{1, 3 ,6\} and kernel size \{3, 5\} combinations, which results in 6 candidate MBConv blocks per TBS location.

We conduct 2 GA-NAS searches with different performance estimation methods. 
In setup 1 we estimate the performance of a candidate network by training it on CIFAR-10 for 20 epochs then compute the test accuracy. 
In setup 2 we first train a weight-sharing Supernet that has the same structure as the backbone in Table~\ref{table:eb0_search_space}, for 500 epochs, and using a ImageNet-224-120 dataset that is subsampled the same way as NAS-Bench-201. The estimated performance in this case is the validation accuracy a candidate network could achieve on ImageNet-224-120 by inheriting weights from the Supernet. 
For setup 2, training the supernet takes about 20 GPU days on Tesla V100 GPUs, and search takes another GPU day, making a total of 21 GPU days. 

\begin{table}[h]
   \centering
   \small
	\begin{tabular}{|c|c|c|c|} 
	\toprule
    \textbf{Operator} & \textbf{Resolution} & \textbf{\#C} & \textbf{\#Layers} \\
    \midrule
	 Conv3x3 & $ 224 \times 224 $ & 32 & 1 \\
	 TBS & $ 112 \times 112 $ & 16 & 1 \\
	 TBS & $ 112 \times 112 $ & 24 & 2 \\
	 TBS & $ 56 \times 56 $ & 40 & 2 \\
	 TBS & $ 28 \times 28 $ & 80 & 3 \\
	 TBS & $ 14 \times 14 $ & 112 & 3 \\
	 TBS & $ 14 \times 14 $ & 192 & 4 \\
	 TBS & $ 7 \times 7 $ & 320 & 1 \\
	 Conv1x1 \& Pooling \& FC & $ 7 \times 7 $ & 1280 & 1 \\
	\bottomrule
	\end{tabular}
	\caption{ \label{table:eb0_search_space} Macro search backbone network for our EfficientNet experiment. TBS denotes a cell position to be searched, which can be a type of MBConv block. Output denotes a Conv1x1 \& Pooling \& Fully-connected layer.}  
\end{table}
Figure~\ref{fig:eb0_archs} presents a visualization on the three single-path networks found by GA-NAS on the EfficientNet search space. Compared to EfficientNet-B0, GA-NAS-ENet-1 significantly reduces the number of trainable weights while maintaining an acceptable accuracy.
GA-NAS-ENet-2 improves the accuracy while also reducing the model size. 
GA-NAS-ENet-3 improves the accuracy further.

\begin{figure*}[t]
	\centering
	\includegraphics[width=4in]{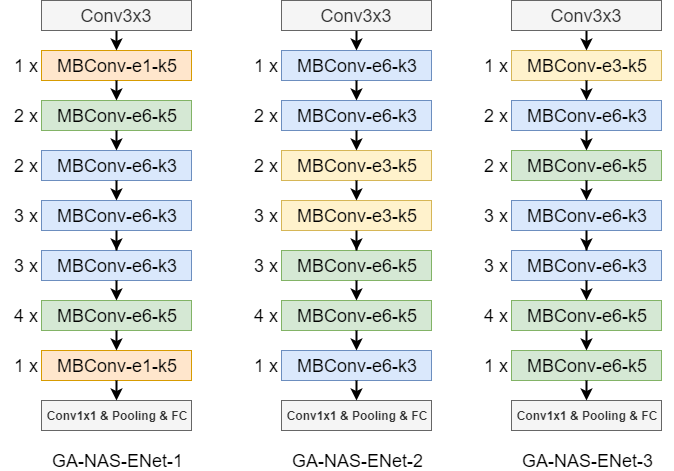}
	\caption{Structures of the single-path networks found by GA-NAS on the EfficientNet search space. In each MBConv block, $e$ denotes expansion ratio and $k$ stands for kernel size.}
	\label{fig:eb0_archs}
\end{figure*}

For ProxylessNAS experiment, we take the ProxylessNAS network structure as the backbone and define 21 searchable locations, as indicated by the TBS symbol in Table~\ref{table:proxylessnas_search_space}. 
We run GA-NAS to search for MBConv blocks with different expansion ratios \{3 ,6\} and kernel size \{3, 5, 7\} combinations, which results in 6 candidate MBConv blocks per TBS location.

We train a supernet on ImageNet 
for 160 epochs (which is the same number of epochs performed by ProxylessNAS weight-sharing search ~\cite{cai2018proxylessnas}) for around 20 GPU days, and conduct an unconstrained search using GA-NAS for around 38 hours on 8 Tesla V100 GPUs in the search space of ProxylessNAS~\cite{cai2018proxylessnas}, a major portion out of which, i.e., 29 hours is spent on querying the supernet for architecture performance. Figure~\ref{fig:pn_archs} presents a visualization of the best architecture found by GA-NAS-ProxylessNAS with better top-1 accuracy and a comparable the number of trainable weights compared to ProxylessNAS-GPU.
\begin{figure*}[t]
	\centering
	\includegraphics[width=5.5in]{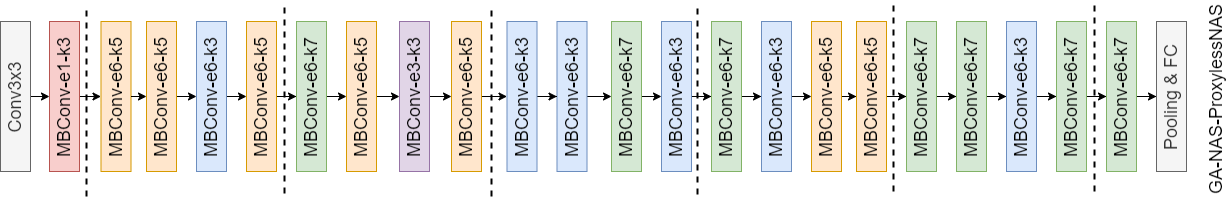}
  	\caption{Structures of the single-path networks found by GA-NAS on the ProxylessNAS search space. }
	\label{fig:pn_archs}
\end{figure*}

\newpage
\begin{table*}[t]
   \centering
   \small
	\begin{tabular}{|c|c|c|c|} 
	\toprule
    \textbf{Operator} & \textbf{Resolution} & \textbf{\#C} & \textbf{Identity} \\
    \midrule
	 Conv3x3 & $ 224 \times 224 $ & 32 & No \\
	 MBConv-e1-k3 & $ 112 \times 112 $ & 16 & No \\
	 TBS & $ 112 \times 112 $ & 24 & No \\
	 TBS & $ 56 \times 56 $ & 24 & Yes \\
	 TBS & $ 56 \times 56 $ & 24 & Yes \\
	 TBS & $ 56 \times 56 $ & 24 & Yes \\
	 TBS & $ 56 \times 56 $ & 40 & No \\
	 TBS & $ 28 \times 28 $ & 40 & Yes \\
	 TBS & $ 28 \times 28 $ & 40 & Yes \\
	 TBS & $ 28 \times 28 $ & 40 & Yes \\
	 TBS & $ 28 \times 28 $ & 80 & No \\
	 TBS & $ 14 \times 14 $ & 80 & Yes \\
	 TBS & $ 14 \times 14 $ & 80 & Yes \\
	 TBS & $ 14 \times 14 $ & 80 & Yes \\
	 TBS & $ 14 \times 14 $ & 96 & No \\
	 TBS & $ 14 \times 14 $ & 96 & Yes \\
	 TBS & $ 14 \times 14 $ & 96 & Yes \\
	 TBS & $ 14 \times 14 $ & 96 & Yes \\
	 TBS & $ 14 \times 14 $ & 192 & No \\
	 TBS & $ 7 \times 7 $ & 192 & Yes \\
	 TBS & $ 7 \times 7 $ & 192 & Yes \\
	 TBS & $ 7 \times 7 $ & 192 & Yes \\
	 TBS & $ 7 \times 7 $ & 320 & Yes \\
	 Avg. Pooling & $ 7 \times 7 $ & 1280 & 1 \\
	 FC & $ 1 \times 1 $ & 1000 & 1 \\
	\bottomrule
	\end{tabular}
	\caption{ \label{table:proxylessnas_search_space} Macro search backbone network for our ProxyLessNAS experiment. TBS denotes a cell position to be searched, which can be a type of MBConv block. Identity denotes if an identity shortcut is enabled.}  
\end{table*}

\end{document}